\long\def\comment#1{}
\newcommand{\clingo}{\textsc{clingo} }
\newcommand{\plasp}{\textsc{plasp} }
\newtheorem{theorem}{Theorem}[section]
\newtheorem{lemma}{Lemma}[section]
\newtheorem{corollary}[theorem]{Corollary}
\newtheorem{definition}{Definition}[section]
\title[Domain-Independent Cost-Optimal Planning in ASP ]{Domain-Independent Cost-Optimal Planning in ASP}
\author[David Spies et al.]
         {David Spies,  
          Jia-Huai You, Ryan Hayward  
         \\
University of Alberta, Edmonton, Canada
         }
\begin{document}

\maketitle

\begin{abstract}
 We investigate the problem of cost-optimal planning
in ASP.  Current ASP planners can be trivially extended to a cost-optimal one by adding weak constraints, but only for a given makespan (number of steps). It is desirable 
to have a planner that guarantees global optimality. 
In this paper, we present two approaches to addressing this problem. First,
we show how to engineer a cost-optimal planner 
 composed of two ASP programs running in parallel. 
Using lessons learned from this, we then develop an entirely new approach to cost-optimal planning, {\em stepless planning}, which is completely free of makespan. Experiments to compare the two approaches with the only known cost-optimal planner in SAT reveal  good  potentials for stepless planning in ASP. The paper is under consideration for acceptance in TPLP.
\end{abstract}

  \begin{keywords}
   Cost-Optimal Planning, Answer Set Programming, CORE-2 ASP Standard.
  \end{keywords}


\section{Introduction}
{\em Answer Set Programming} (ASP) has been celebrated for its elegance and applicability to AI planning \cite{Lifschitz02}.
A planning problem in this context is, given a collection of actions, each with pre-conditions and effects, and fluents which are properties over states, determine whether there is a sequence of actions from an initial state to a final state \cite{DBLP:journals/ai/FikesN71}.
State-of-the-art ASP planners like \plasp have been developed \cite{gebser2011plasp,DimopoulosGLRS19}.

In cost-optimal planning, each action is associated with a cost, and
our objective is to find a plan which minimizes the sum cost of all
actions. Any ASP planner
can be trivially extended to a cost-optimal planner by adding weak
constraints, but only for a given makespan. Such a planner does not guarantee global optimality.  Eiter et al. \shortcite{eiter2003answer} present an approach to addressing this problem, but make the assumption that a
polynomial upper bound on makespan exists and is known in advance.


In the related field of SAT, 
some work has been done on cost-optimal Partial-Weighted-MaxSat
planning with regard to makespan (e.g., \cite{maratea2012planning}
and \cite{chen2008plan}). Again, readers should find this somewhat unsatisfactory.
After all, finding a plan that is ``cost-optimal with regard to makespan''
is just a way to sidestep the complication the real problem 
presents. A makespan is just an internal artifact
of the SAT approach to planning. A solution should not depend on
the way in which the planner happens to order the actions. Ideally,
we want an approach to planning which guarantees a globally optimal solution and
makes no mention of makespan.

We are aware of only one existing logic-based approach that tackles this much more
difficult problem for SAT planning, the paper by Robinson et al. \shortcite{robinson2010cost}. 
Inspired by this work, we pursue a separate investigation into globally cost-optimal
planning in ASP and develop a two-threaded planner,  one thread being a regular planner and the other an
{\em any-goal} planner. While the former computes successive decreasing upper bounds of the optimal cost by iteratively increasing the makespan, the latter computes
successive increasing lower bounds by planning in a modified environment where some amount of cheating is allowed. We achieve this by forcing the any-goal planner to ``make progress'' at each time-step.
An optimal plan is obtained when the two bounds meet in the middle, i.e., they are guaranteed to agree at some point.
However, unlike in Robinson's approach, through the use of ``make-progress" rules, we're able to develop a planner which guarantees to eventually find an optimal solution (or report no solution) even when the problem contains actions with zero cost.

Using insights gained from this approach, we then develop
a new approach to logic-based planning, {\em stepless planning}. The idea is to first engineer a planner which produces
``partially ordered plans" \-- actions arranged into a graph of dependencies where stable model semantics ensures that the graph is acyclic; then we show how to express the problem in $\Sigma_2^P$ of ``making progress"; and finally, we show a critical component of the planner, the {\em suffix layer},
which determines how many occurrences of each of the actions and fluents we will need to produce an optimal plan.

We report experiments on our cost-optimal planners on benchmarks of \cite{robinson2010cost} and compare with Robinson's SAT-based planner. We found that the stepless planner outperformed the other two planners in most domains and the two-threaded planner outperformed the SAT-based planner in most domains.

The paper is organized as follows. Section \ref{ASPPlan} describes ASP planning translated from SATPlan. Section \ref{no-solution} discusses the problem of no-solution detection and provides solutions.
Section \ref{no-solution-extension} extends no-solution detection  to optimal planning, with Section \ref{delete-free0} adding a {\em delete-free planner} as a suffix layer to improve the effectiveness.
Section \ref{2-threaded} gives the two-threaded planner and Section \ref{stepless} is about the stepless planner. Section \ref{experiments} reports experiments and Section \ref{related-work} is on related work and future directions.  


We assume that the reader is familiar with STRIPS planning. The ASP encodings in this paper are constructed to run on system \clingo and generally follow the ASP-Core-2 Standard \cite{ASP-standard}, except that we adopt two special features provided by \clingo\!:
(i) we will use ‘;’ to separate rule body atoms since the conventional comma sign ‘,’ is overloaded and has a different meaning in more complex rules, and (ii) the disjunctive head of a rule may be expressed conveniently by a conditional literal. 

Parts of this paper have been moved to Appendices, including proofs, encodings, and some technical explorations. For more information, the reader may also want to consult the thesis written by the first author of this paper \cite{spies2019}.



\section{Preliminaries: STRIPS Planning in ASP}
\label{ASPPlan}
We adopt a direct translation of 5 rules of
SATPlan \cite{kautz2004satplan04} into ASP and call the resulting planner ASPPlan.

\begin{verbatim}
rule 1.  holds(F,K) :- goal(F); finalStep(K).
rule 2.  happens(A,K-1) : add(A,F),validAct(A,K-1) :- holds(F,K); K > 0.
rule 3.  holds(F,K) :- pre(A,F); happens(A,K); validFluent(F,K).
rule 4.  :- mutexAct(A,B); happens(A,K); happens(B,K).
rule 5.  :- mutex(F,G); holds(F,K); holds(G,K).
\end{verbatim}
where $validAct(A,K)$ means that action $A$ can occur at time $K$ and $validFluent(F,K)$ means fluent $F$ can be true at time $K$.\footnote{Blum and Furst \shortcite{blum1997fast} give a handy way to identify for each
action and each fluent, what is the first layer at which this action/fluent
might occur by building the {\em planning graph}. Note that validAct/2 and validFluent/2 as well as predicates mutexAct/2 and mutex/2 are all extracted from the planning graph.}  Time-steps used in constructing a plan are also called {\em layers}.

Rule 1 says that goals hold at the final layer.
In rule 2, if a fluent holds at layer $K$, the disjunction of actions that have
that fluent as an effect hold at layer $K-1$. The next rule says that actions at each layer imply their preconditions. The last two rules are mutex constraints: in rule 4, actions with (directly) conflicting preconditions or effects are
mutually exclusive, and in rule 5, the fluents that are inferred to be mutually exclusive are encoded as constraints.


Following SATPlan, we add to our plan ``preserving'' actions for each fluent.
The goal is to simulate the frame axioms by using the existing
machinery for having an action add a fluent that gets used some steps later. 
These preserving actions can be specified as:

\begin{verbatim}
action(preserve(F)) :- fluent(F).
pre(preserve(F),F) :- fluent(F).
add(preserve(F),F) :- fluent(F).
\end{verbatim}
where each fluent $F$ has a corresponding {\em preserving action} denoted by term $preserve(F)$. Preserving actions can be easily distinguished from regular actions. 
Now that an action occurs at time $K$ indicates that
its add-effect $F$ will hold at time $K+1$.

Note that the reason why rule 5 of ASPPlan prevents fluents from being deleted before they're
used is a bit subtle.
In order for a fluent to hold, it must occur in conjunction with a preserving
action at each time-step it's held for. A preserving action has that fluent as a
precondition and so would be mutex with any action that has it as a delete
effect. This means that deleting actions cannot occur as long as that fluent
is held (by rule 4).\footnote{As a further note, when 
PDDL (planning domain definition language) without any extensions is defined, goals can only be positive and actions
can only have positive preconditions. There is a {\tt :negative-preconditions}
extension to PDDL, but we didn't use it.
Any problem which uses :negative-preconditions can be trivially
adapted to avoid using it by adding a fluent :not-F for every fluent :F and then
adding a corresponding add-effect wherever there's a delete-effect and vice
versa.}

Like SATPlan, we run this planner by solving at some initial makespan $K$, where $K$ is the first layer at which {\tt validFluent(F,K)} holds for all {\tt goal(F)},
and if it is UNSAT, we increment $finalStep$ by 1 until we find a plan.

This is a straightforward and unsurprising encoding in every respect,
but has a somewhat surprising consequence as compared to SATPlan. Because ASP models are
stable, for any fluent $F$, $holds(F,K)$ can
only be true if there exists some action which requires its truth
as per rule 3. Similarly for actions as per rule 2.
Furthermore, since rule 2 is disjunctive at every step, the set of actions which occurs is a minimal
set required to support the fluents at the subsequent step. This conforms
exactly to the approach to planning in \cite{blum1997fast}: First build
the planning graph, then start from the goal-state planning backwards,
at each step selecting a minimal set of actions necessary to add all
the preconditions for the current set of actions. That is, in this ASP translation, the neededness-analysis as carried out in \cite{robinson2008compact} is accomplished automatically during  grounding
or
during the search for stable models. 


\medskip
{\bf Encoding Reduction:}
Rule 4 in ASPPlan
can blow up in size when grounded because nearly any two actions
acting on the same
fluent can be considered directly conflicting.
For example, imagine a planning problem in which there is a crane
which we must use to load boxes onto freighters and there are many
boxes and many freighters available but only one crane. Then we will
have one such constraint for every two actions of the form,
$load(Crate,Freighter)$,
 for any crate and any freighter. As there is already a quadratic number of actions in the problem description size, 
the number of
mutex constraints over \emph{pairs} of actions is \emph{quartic} in the initial
problem description size.

We would like to avoid such an explosion by introducing new predicates
to keep the problem size down. We will only consider two actions to
be mutex if one deletes the other's precondition. But we will take
extra steps to ensure that no add-effect is later used if the same
fluent is also deleted at that step. Here is the revised encoding of
rule 4.

\begin{verbatim}
used_preserved(F,K) :- happens(A,K); pre(A,F); not del(A,F).
deleted_unused(F,K) :- happens(A,K); del(A,F); not pre(A,F).
:- {used_preserved(F,K); deleted_unused(F,K);
    happens(A,K) : pre(A,F), del(A,F)} > 1; valid_at(F,K).

deleted(F,K) :- happens(A,K); del(A,F).
:- holds(F,K); deleted(F,K-1).
\end{verbatim}

Effectively, we are splitting the ways in which we care that an action
$A$ can relate to a fluent $ F$ into three different cases: (i)
$A$ has $F$ as a precondition, but not a delete-effect; (ii)
$A$ has $F$ as a delete-effect, but not a precondition; and (iii)
$A$ has $F$ as both a precondition and a delete-effect.


By explicitly creating two new predicates for properties (i) and (ii),
we have packed this restriction into one big cardinality constraint.
Further, we must account for conflicting effects, so we define one
more predicate ({\tt deleted/2}) which encapsulates the union of all actions
from properties (ii) and (iii) (those that delete $F$) and assert that $F$ cannot
hold at this step if any of those actions occurred in the previous
one. 

Note that the formulation in \cite{blum1997fast} will not allow two actions $A$ and $B$
to ever happen at the same time-step $K$ if they have ``conflicting effects" (one
adds a fluent $F$ and the other deletes the same fluent). Our encoding
allows this, but only in cases where $holds(F,K)$ is false. 
Except for this technicality, the two are otherwise equivalent. A detailed justification is presented in Appendix B.

\section{Planning with No-Solution Detection}
\label{no-solution}

From a theoretical standpoint, let us consider why cost-optimal planning
is such a difficult problem. When the planner terminates with a plan of cost $c$, it is additionally asserting ``I have proved there does not exist a plan of cost $c-1$''.
But here we immediately have a problem because all the planners we have
written so far in ASP are not actually \textit{planners} in the sense of
\cite{blum1997fast}; they cannot identify when a problem has no solution.
If there is no solution
our ASP planners will simply
march on forever  searching for one until somebody kills the process.
So before we can create a cost-optimal planner, we must first create
a (normal) planner which can determine if a problem has no solution.

Planning is PSPACE-complete. 
What makes planning decidable is, of course, the finite
state space. Any plan which goes on too long will eventually visit some state twice, so we only need to search for a plan among those
that never revisit the same state. We can, of course, produce
a naive upper bound on the number of possible states by taking $n=2^{\left|fluents\right|}$
and then terminate the search after $n$ steps, but let us try to do 
better.
Indeed,
a main technical innovation of this work is the development of
ASP-based decision procedures for planning, which can potentially prove
the unsatisfiability of a planning problem much earlier than when taking
the theoretical number of states as upper bound for the planning horizon.


First, instead of requiring goals to hold at final step (rule 1 of Section \ref{ASPPlan}),
let us say
\begin{verbatim}
{holds(F,K)} :- fluent(F); finalStep(K).
\end{verbatim}

By using a choice rule here, the planner can now choose any goal it
wants and then plan towards that goal. This makes our instance always
satisfiable (just produce any valid sequence of actions, then take
the end-state and claim that was your goal).

Now here comes the ``we must make progress'' rule. We'll refer to this as the ``layered make-progress'' rule (which also includes its strengthening to be discussed in the next section)
to distinguish it from the ``stepless make-progress rule'' for stepless planning.

\begin{verbatim}
:- not holds(F,K) : not holds(F,J), fluent(F); step(J); step(K); J < K.
\end{verbatim}

In English: ``For any time-step pair $J$ and $K$ where $J<K$,
we cannot allow that every fluent $F$ which does not hold at $J$ also
does not hold at $K$.'
That is, any sequence containing time-steps $(J,K) ~(J < K)$ where the state of $K$ is a subset of that of $J$ would fail this rule.

Thus, the rule guarantees that there exists a makespan $n$ for which our
planning instance is UNSAT. This is because we are now enforcing that
the state must change at every time-step to take on some value which
it did not have in any previous time-step.  
But if there are only $m$
reachable states in our planning instance, then for all $n>m$ this
is clearly impossible.


Hence, we can build a complete planner by running two separate computations
in parallel. The first is our usual ASP planner which increases the
makespan until it finds an instance for which the solver finds
a plan. The second is our
``any-goal'' planner which increases the makespan until it finds
an instance for which the solver returns UNSAT, at which point we record
the previous makespan as $maxlength$.
Once that is done, if the
first instance manages to reach $maxlength$ and report UNSAT, we
can safely claim to have proved that no plan exists and terminate
the solver.

\subsection{Stronger Notions of Progress}\label{stronger-notions}

Unfortunately,
there exist problems that contain many independent
variables which may be separately manipulated to generate a large
easily-traversable state-space. For such problems, our solver above can
produce long plans which idly ``flip bits'' to avoid repeating themselves.

To make this scenario more concrete, imagine that we take any unsolvable
planning problem and adjoin to it a binary counter with one hundred
two-state switches. In addition to the actions from the original problem,
we also have two hundred actions which independently flip each of
the switches in the counter (either from $0$ to $1$ or from $1$
to $0$). Even though this counter has no impact on the problem itself,
it suffices to increase the length of the longest plan by a factor
of $2^{100}$ because for every state in the longest path, we can
flip through all possible arrangements of these switches before proceeding
to the next state. This easily puts the possibility of solving the
problem out of reach whenever there is no solution.

One way to deal with this is to somehow encode into our planner
the knowledge that the longest possible time it can take to iterate
over the possible states of two independent subproblems is the maximum
of the respective longest times rather than the product. One attempt at this
is given in Appendix A,
where we reduce the length of the
longest plan in the above example to $100$. 

We want to do better though and we can. We are able to formulate a stronger
definition of ``make progress'', which we conjecture perfectly
defeats the \emph{independent parts} problem in all its forms. 
First, a definition.
\begin{definition}
\label{A-partially-ordered-plan}A {\em partially-ordered plan}
is a transitive directed acyclic graph $G$ (equivalently, a partial
ordering) of ``action occurrences'' such that all topological sorts of
$G$ are valid sequential plans.
\end{definition}



Starting with any sequential plan $S$, we can generalize it to its
canonical partially-ordered plan as follows. If $a$ precedes $b$
in $S$, then we will say $a\prec b$ for actions $a$ and $b$ (adding
an edge from $a$ to $b$) iff any of the following holds:
\begin{enumerate}
\item $a$ adds some fluent which is used as a precondition for $b$
\item $b$ deletes some fluent which is used as a precondition for $a$ (and
$a\ne b$)
\item\label{rule-add-delete} $a$ adds a fluent which $b$ deletes
\item\label{rule-delete-add} $a$ deletes some fluent and $b$ adds the
same fluent 
\item $a$ and $b$ are different instances of the same action\footnote{Actually, this rule is not strictly necessary as any {\em minimal} plan which satisfies the other five rules {\em will also} satisfy it.  By keeping it, we don't have to worry about discussing {\em action occurrences} until later when we actually start digging into the stepless planner.}
\item There exists an action $c$ such that $a\prec c$ and $c\prec b$
\end{enumerate}

Note that these rules only apply to $a$-$b$ pairs for which
$a$ precedes $b$ in $S$ (otherwise, the statements \ref{rule-add-delete} and \ref{rule-delete-add} above would appear
to be contradictory).

If we add a source and sink $s$ and $t$ respectively to any partially-ordered
plan such that for all actions
$s\prec a\prec t$, we can consider
any $s$-$t$ cut $x$ as a generalized ``intermediate state'' for
this plan. To see this, take any ordering where the $s$-side actions
all precede the $t$-side actions and look at what fluents hold after
we have  taken only the $s$-side actions. Let us call this state $x$-state.

Here comes the strongest possible (domain-independent) definition
of ``make progress'' that we can think of.
The idea is that for any set of actions identified as being sandwiched in between two cuts, they must make progress by turning some fluent to true.

\begin{definition}\label{partial-progress}
A partially-ordered plan is {\em strongly minimal} iff, given any two $s$-$t$
cuts $x$ and
$y$, if there exists any $t$-side action in $x$ which is an $s$-side
action in $y$, then there must be some fluent which is true in $y$-state
but not in $x$-state. We similarly call a sequential plan strongly
minimal if its canonical partially-ordered plan is.
\end{definition}

An action is said to make progress if no two cuts exist on either side of
the action without this property (that some new fluent occurs between them).
In a strongly minimal plan, \emph{all} actions make progress.  

This beautifully handles the
one hundred 3-state switch scenario by forcing us, for each switch
$w$, to consider the generalized intermediate state where all $w$-flips happen before anything else. Thus if $w$ is flipped to the same state twice we can produce the cuts at each of those states demonstrating that this plan is not minimal.
The ASP encoding of this rule is complicated,
 and so will be
relegated to Section \ref{stepless} when we 
apply it to stepless planning.\footnote{The same technique \emph{can} with some effort be encoded for layered planners as well. Essentially, it requires quite a bit of boilerplate in order to talk about \emph{next} and \emph{previous} occurrences of each fluent and action.} This is the ``stepless make-progress rule''.
In our two-threaded planner introduced next, we have chosen to implement the layered make-progress rule.

\section{\label{no-solution-extension}Extending No-Solution
Detection to Cost-Optimality Detection}

Now, let us add weak constraints for action costs to ASPPlan (called {\em MinASPPlan}\footnote{Besides ASPPlan discussed here, it also includes a smart encoding of mutex constraints (cf. Chapter 6 of \cite{spies2019}).}) as well as to the any-goal solver
so that a plan with the least action cost can be identified by each solver for a given makespan.
We run MinASPPlan and the any-goal
solver in parallel.

Once the MinASPPlan solver produces a plan of cost $C$, we treat
$C$ as an upper-bound on the cost of an optimal plan. Then we
tell the any-goal solver to only search for non-repeating plans of
cost $<C$. When the any-goal solver terminates with UNSAT at some
time-step $n$ we claim that all non-repeating plans with cost
$<C$ have a makespan $<n$ which means we can stop the MinASPPlan solver after finishing with makespan $n-1$. Meanwhile, if the any-goal solver gives back a minimum-cost plan for layer $n$,\footnote{Note that make-progress rules are crucial \-- without them, the minimum cost will always be zero.}
then that cost \textit{is a
lower bound} on the optimal cost of any plan with makespan at least $n$. Thus, by increasing makespan, MinASPPlan computes successive lower (i.e., non-increasing) upper bounds of optimal cost and the any-goal solver computes successive increasinglower bounds of the optimal cost; when the two meet in the middle, an optimal plan is identified.

Let us summarize what we have so far.
We have two threads iteratively solve
successively larger instances.
We will name them I and II in deference to their similarity to
Robinson's \cite{robinson2010cost} Variant-I and Variant-II
encodings.
\begin{itemize}
\item I is MinASPPlan; 
 and
\item II is the any-goal solver augmented by weak constraints by action costs. It is similar to I except for two
major differences: (i) in place of the goal conditions, II is allowed to choose its own
goal, and (ii) II is given some notion of progress together with the constraint
that it \textit{must make progress} at every time-step.
\end{itemize}

How we determine when to stop depends on which solver (I or II)
lags behind. I's result costs are monotonically nonincreasing while II's result costs are monotonically nondecreasing. If I lags behind, then as soon
as I's lowest cost is $\le$ the II's cost for the layer it is
currently trying to solve, we can stop and report the solution at that layer as
optimal. If II lags behind, then
as soon as its cost for some layer is $\ge$ the best known I-cost
so far (at \emph{any} layer), we can stop.

The asymmetry in deciding when to stop happens because of the types
of bounds I and II produce. I will never produce a cost $D<C$
if $C$ is the optimal cost, but II will continually increase its
lower bound eventually marching straight past $C$ and on to infinity
(the point at which it returns no solution). This is why we must take
into account the layer at which each lower bound was produced when
determining if we are done, but we do not care what layer the upper
bound was produced at.

\section{Delete-Free Planning}
\label{delete-free0}
The lower bounds produced by the Variant-II solver can be improved by adding a {\em suffix layer}, which is a {\em delete-free planner}.
Delete-free planning (DFP) is a special case of planning which happens to
be in NP. These
are the planning problems without delete-effects. 
Surprisingly, DFP can be modeled as a graph problem. 

Given a directed bipartite graph $G=\left(X,Y,E\right)$ with weights
on $X$ and a goal set $Y_{F}\subseteq Y$, find a minimum \textit{acyclic}
subgraph $G^{*}=\left(X^{*},Y^{*},E^{*}\right)$ such that
\begin{enumerate}
\item $Y_{F}\subseteq Y^{*}$
\item If $x\in X^{*}$ and $\left(y,x\right)\in E$, then $y\in Y^{*}$
and $\left(y,x\right)\in E^{*}$
\item For all $y\in Y^{*}$, $E^{*}$ contains at least one edge $\left(x,y\right)$
(and $x\in X^{*}$).
\end{enumerate}

{\bf Connection to DFP:}  $X$ is the set of actions, and $Y$
is the set of fluents, the $\left(x,y\right)$ edges are add-effects
and the $\left(y,x\right)$ edges are preconditions. $Y_{F}$ is the
goal set and the initial set has been removed (together with all corresponding
preconditions) from the graph. Rule 1 means the goal fluents must be
true. Rule 2 means an action implies its preconditions. Rule 3 means
every fluent must have a causing action. The graph must be acyclic to
ensure the actions can occur in some order. This is possible because
there is no incentive to ever take an action or cause a fluent more than
once. As soon as any fluent is true, it is permanently true.

We now can encode DFP in ASP as solving the above graph problem {\em independent of makespan}. 
Its encoding in ASP can be found in Appendix C.

The key takeaway is that the encoding is an efficient ``one-shot" encoding in ASP.
Rather than structuring the problem into layers and then iteratively increasing
the makespan until a solution is found, we eschew layers entirely and encode
the problem as a single ASP instance. This is similar to how (unlike with SAT encodings)
Hamiltonian Path
can be encoded one-shot in ASP \cite{spies2019} without needing
numeric fluents or layers or quadratic space when grounded.
The problem of ensuring that an acyclic structure exists is solved by 
the stable-model semantics.

This may be considered the most novel contribution of this paper. Besides for delete-free planning
(and the suffix layer in the next section),
the same trick will also be used later for "Stepless Planning" where we arrange actions
and fluents into a graph and rely on stable-model semantics to ensure that the
graph is acyclic. Without the ability to do this, other encodings of planning
problems are forced to rely on either using numeric fluents (not compatible with
SAT-based techniques) or structuring the problem into layers (which multiplies
the grounded size of the problem by the number of layers needed).

\section{A Two-Threaded Cost-Optimal Planner with Suffix Layer}\label{2-threaded}
As with $A^*$-search, we can generate successively better lower bounds by
planning normally from the starting state $Q$ to some intermediate
state $S$ chosen by the planner and then finding the minimum-cost solution to the delete
relaxation for the planning problem from $S$ to the goal state.\footnote{
  More precisely, we encode in ASP the problem of finding the minimum \emph{total} cost
  across all possible subgoal states
  of $cost(normal plan) + cost(suffix plan)$ (given that the normal plan respects
  whichever progress rule we choose to employ)
}
This suggests a natural way to modify our Variant-II encoding in Section
\ref{no-solution-extension} to find better lower bounds.
We append a ``suffix layer'' at the end, which must generate a plan
in the delete relaxation of the problem from the chosen any-goal state
to the actual goal state. The costs for any actions taken in the suffix
layer must be added to the total cost of our plan. Indeed, in many
cases this produces a remarkable lower bound.

We now give a complete description of our two-threaded planner. We have two ASP programs running in parallel. One is the
Variant-I standard ASPPlan solver with weak constraints for action costs. The other is the
Variant-II solver with a progress rule and appended a suffix layer.

\begin{itemize}
  \item Both solvers independently run successively on makespan $0$, $1$, $2$
    etc. until we kill them.
  \item Each time the Variant-I solver begins solving a new makespan, we update
    the current makespan being solved for.
  \item When the Variant-I solver finds a plan, we record the plan and its cost
    if this is the lowest-cost plan found so far.
  \item When the Variant-II solver finds an optimal plan for some makespan
    using the suffix layer,
    we record the optimal cost as a lower bound for that makespan
    (as well as all larger makespans).
  \item If the Variant-II solver ever finds an optimal plan which \emph{doesn't}
    use the suffix layer, then that plan is globally optimal. We can return it as
    a solution and entirely ignore the Variant-I solver (this only happened
    twice in all of our experiments and seems to be fairly unlikely).
  \item If the Variant-II solver obtains
    UNSAT for a layer, we can stop running it and record the cost of that and
    all future layers as $\infty$.
  \item Any time the best-cost plan found so far (by the Variant-I solver) is
    no greater than the Variant-II lower bound for the currently-solving layer or
    \emph{any} earlier layer, we can stop both solvers and report that plan as
    an optimal solution.
  \item Any time the Variant-I solver is solving for a makespan whose Variant-II
    lower bound is $\infty$, we stop the solver and return the best-cost
    plan found so far or ``no solution'' if no plan has been found.
\end{itemize}

The correctness of our two-threaded planner depends on the correctness of two component solvers. While this is straightforward for Variant-I solver, we have the following claim for Variant-II solver.

\begin{theorem} Let $Q$ be a planning problem. Assume that we are using the layered `make-progress' rule.
\begin{itemize}
\item (Variant-II Soundness)
If $Q$ has an optimal solution $P$ with cost $C$ and makespan $n$, then for any $k \leq n$,
at makespan $k$ the Variant-II solver will find a relaxed plan with cost $\leq C$.
\item (Variant-II Completeness)
If $Q$ has no solution, then the Variant-II solver will eventually produce an UNSAT instance.
\end{itemize}
\end{theorem}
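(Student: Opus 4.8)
The plan is to prove the two bullets separately; both rest on the monotonicity of STRIPS transitions under set inclusion. Recall from Section~\ref{ASPPlan} that preconditions and goals are positive, so if $s\subseteq s'$ and an action $a$ is applicable in $s$ then $a$ is applicable in $s'$ and the successors satisfy $(s\setminus del(a))\cup add(a)\subseteq(s'\setminus del(a))\cup add(a)$; by induction this lifts to whole action sequences, so a sequence applicable in a smaller state is applicable in any larger one and ends in a correspondingly larger state, and in particular a superset of a goal state is a goal state.

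For Variant-II soundness, the workhorse is a \emph{compression lemma}. Suppose a plan $P=a_0\cdots a_{n-1}$ with state trace $s_0,\dots,s_n$ violates the layered make-progress rule, say $s_K\subseteq s_J$ with $J<K$. Then splicing out the block $a_J\cdots a_{K-1}$ yields a plan $P'$: the tail $a_K\cdots a_{n-1}$ is applicable in $s_J\supseteq s_K$ and, by monotonicity, drives $s_J$ to a superset of the old final state $s_n$, hence still to a goal state; the makespan strictly drops; and, because action costs are non-negative, $\mathrm{cost}(P')\le\mathrm{cost}(P)$. (A small check: an action that was planning-graph-valid at its old step is still valid at its new, earlier step, since its preconditions provably hold there along $P'$ itself, and the planning graph only over-approximates reachability.) Iterating, an optimal plan $P$ (cost $C$, makespan $n$) is turned into a plan $P^\ast=b_0\cdots b_{m-1}$ that respects the make-progress rule, has $m\le n$, and still costs $C$. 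Now, given $k\le m$, the length-$k$ prefix $b_0\cdots b_{k-1}$ is a valid action sequence from the initial state, it still respects the make-progress rule (a prefix of a progress-respecting trace is progress-respecting), and it ends in a state $S_k$ from which $b_k\cdots b_{m-1}$ is a genuine plan to the goal --- and a genuine action sequence is, a fortiori, a feasible solution of the delete-relaxed problem used by the suffix layer (dropping delete-effects only enlarges reachable states, and an actual sequence induces the acyclic structure the DFP graph problem demands). Feeding $S_k$ as the chosen any-goal and $b_k\cdots b_{m-1}$ as a feasible suffix-layer solution, the Variant-II instance at makespan $k$ is satisfiable with total cost $\mathrm{cost}(b_0\cdots b_{k-1})+\mathrm{cost}(b_k\cdots b_{m-1})=\mathrm{cost}(P^\ast)=C$; since Variant-II minimizes total cost, the plan it returns costs $\le C$.

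For Variant-II completeness, the observation is that the make-progress rule alone caps the makespan of any satisfiable Variant-II instance. Along a progress-respecting trace $s_0,\dots,s_k$ no state is a subset of an earlier one, so in particular all $k+1$ states are distinct; since the set of states reachable from the initial state by valid action sequences is finite --- indeed, working over the planning-graph-valid fluents of Section~\ref{ASPPlan} keeps it small --- there is a bound $m$ on the number of reachable states, hence no progress-respecting action sequence of length $m$ or more. Therefore for every makespan $k\ge m$ the Variant-II instance is UNSAT, regardless of the suffix layer (the suffix actions are appended after a prefix that cannot exist). As the two-threaded planner runs Variant-II on makespans $0,1,2,\dots$ in turn, it reaches such a $k$ and obtains UNSAT. (This argument does not actually use ``$Q$ has no solution''; the hypothesis is stated because that is the case in which this eventual UNSAT is what forces termination, the solvable case being handled by Variant-I producing and certifying an optimal plan first.)

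The routine parts are the monotonicity induction and the pigeonhole count; the subtle part is the ``given $k\le m$'' step of soundness. The prefix construction only reaches makespan $m$, which may be strictly below the makespan $n$ of the \emph{particular} optimal plan we were handed, so to get the claim for all $k\le n$ one must observe that it suffices to treat progress-respecting optimal plans --- and that there always is one, namely any optimal plan of least makespan, since the compression lemma would otherwise contradict minimality of that makespan. I expect this quantifier bookkeeping, together with pinning down exactly which objects count as admissible suffix-layer solutions (so that replaying a real plan inside the delete relaxation is legitimate), to be where the argument needs the most care.
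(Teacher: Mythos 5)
Your proof follows essentially the same route as the paper's: splice out the block of layers between a violating pair with $S(K)\subseteq S(J)$ to obtain a progress-respecting optimal plan (the paper's compression lemma), observe that its prefixes still make progress and that the tail's actions solve the delete relaxation, combine these to witness a Variant-II solution of cost $\le C$ at each makespan $k$, and prove completeness by the same pigeonhole on the finite state space. The quantifier issue you flag ($k\le m$ for the compressed makespan $m$ versus $k\le n$) is glossed over identically in the paper's own proof via ``we may assume WLOG that $P$ makes progress,'' so your explicit treatment of it, and of planning-graph validity after splicing, is if anything slightly more careful than the original.
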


Intuitively, the completeness is due to non-repetition of states enforced by the `make-progress' rule, and the soundness follows from the fact that if there exists a plan, then it can be ``reduced” to a plan
which satisfies the `make progress' rule. Furthermore, it can be truncated at any lower makespan
to a partial plan which satisfies the `make progress' rule.
A more detailed argument (and all proofs of the claims of this paper) can be found in Appendix B.

\section{Planning without Layers: Stepless Planning}
\label{stepless}


Besides the delete-free planner above (which is only useful for delete-free
planning problems), all the planners so far in this paper
(and indeed, all SAT/ASP planners
that we have encountered) have used layers to order the actions and fluents
that occur within a planning problem. But let us consider the notion of \textit{partially ordered plan} from Definition \ref{A-partially-ordered-plan},
where no layers are specified. 
Any topological sort of this
graph corresponds to a valid plan. Perhaps we could avoid layers entirely and embed action-dependencies directly.
The idea here is that, just as with delete-free planning, we can create
a plan by specifying only which actions and which fluents hold, and
we will rely on stable model semantics to ensure that the resulting solution graph
is acyclic.

As was mentioned in Section \ref{delete-free0} this is the most novel
contribution of this paper. We use stable-model semantics, rather than layers,
to produce an acyclic plan.

There is a key difference between delete-free planning and full stepless planning though,
which accounts for the distinction in computational complexity.
In the case of delete-free planning, no fluent holds more than once and no action
occurs more than once. In stepless planning, it is possible for an action to
occur multiple times. As such, we will have to have separate atoms in our encoding
representing each \emph{occurrence} of an action.
But prior to solving, we don't know how many occurrences of each action or fluent
will be needed.

Here, we will first present a solver that \emph{assumes}
it has enough occurrences and then 
we will come
back to the issue of figuring out how many of each are needed in order to
produce an optimal plan.
The stepless planner is \emph{significantly} more complicated than
anything else done in this paper so we put more care into explaining what each
line of ASP code does, but we will have to do it in an appendix (Appendix D). Here we provide an outline of the planner.  Additionally, since no planner
like this has ever been built before, we will take more care to try and
bridge the gap between the standard approach to planning and the approach being
presented here. 

\subsection{Stepless Planner Encoding}
To avoid
an $O\left(\left|\mbox{actions}\right|^{2}\right)$-size encoding,
we don't directly encode dependencies between actions. Instead we use the fluents as
intermediate nodes in the solution graph.

An \emph{occurrence} of a fluent $F$ will be encoded as an object in an atom,  $fluentOcc(F,M)$,
where $M$ is a sequentially-ordered index. $M=0$ is reserved for the initial
fluents. All others start at $M=1$ (when caused by some action).
Similarly, the object $
actOcc(A,N)$
indicates an occurrence of action $A$.\protect\footnote{We index action and fluent
occurrences with numbers $N$ and $M$ and have symmetry-breaking rules ensuring
that the occurrences happen in numerical order for a given action or fluent,
but it is important to understand
that these numbers are \emph{not} layers. There's no global \emph{step} of any
kind to which they correspond. A fluent occurrence can be used as a precondition
for an appropriate action occurrence regardless of what their indices
are or how they relate to each other.
The same goes for an action \emph{causing} a fluent. The indices are simply to
be able to distinguish between multiple occurrences of the same object; they
have no global significance or relation to any other object.}
In stepless planning, there are no preserving actions since there are no layers
to preserve things across, and we don't utilize mutex relationships between objects. 
Whereas in our previous encodings the causes and destroyers of each
fluent were implicit, here we must explicitly give which occurrence
of which action $AO$ causes which fluent occurrence $F\!O$ to hold ($causes(AO,F\!O)$) and which occurrence of
which fluent $F\!O$ is used as a precondition for which action occurrence $AO$ ($permits(F\!O,AO)$).
Additionally, we need an atom for each deleted fluent occurrence $F\!O$ which
action occurrence $AO$ has as a precondition and deletes ($deletes(AO,F\!O)$) and
also one in the rare case that an action has a fluent as
a delete-effect, but not a precondition, for which occurrence of the fluent $F$ the
action occurrence $AO$ follows ($follows(AO,F\!O)$). (Refer to Appendix D
under the subtitle {\em Problem Description}.)

From this we can structure the graph and assert that it is acyclic. For each
action occurrence we have an ``event''; additionally there is an event for the
start and end of each
occurrence of each fluent. There is also an event ``goal'' which corresponds to
the goal state being reached.

Events are grouped into \emph{vertices} in our graph
each of which contains at most one action occurrence.
When an action occurrence \emph{causes} a fluent, the action and
the start of that fluent belong to the same vertex. Similarly when it
\emph{deletes} a fluent, the action and the end of the fluent belong
to the same vertex.
To encode this we use the predicate $inVertex/2$ which indicates that its first
argument belongs to the vertex named by the second argument. (Refer to Appendix D 
under
the subtitle {\em Plan Event Graph}.\protect\footnote{We have removed the goal event from the encoding in the appendix since it is incompatible with the suffix layer. To see the original encoding refer to \cite{spies2019}.})

\subsection{Making Stepless Progress with a Suffix Layer}
\label{progress-suffix}
Now we need a way to assert that the action occurrences of a given stepless plan ``make
progress''. With no layers to make assertions about, the only notion of
progress we are left with is the definition (Def. \ref{partial-progress})  of a plan which is {\em strongly minimal}.  This definition logically takes the form of ``there does not
exist a pair of $(s,t)$-cuts such that ...." This means that \emph{given} a
particular plan, determining whether it is strongly minimal is likely
co-NP-complete (membership is straightforward but the hardness is an open conjecture), and then the problem of \emph{determining the existence of}
such a plan for a given
collection of atoms and fluents could possibly be $\Sigma_2^P$-complete.
Luckily,
ASP gives us a way to encode problems in $\Sigma_2^P$ through the use of \emph{disjunctive rules} \cite{Baral:aspbook}.
The code for this can be found in Appendix D 
 under the subtitle {\em Strong Minimality}.

If there aren't enough occurrences of a fluent or action, we
can tack on a suffix layer in the same way we did with the stepped-cost-optimal
planner. In the code in Appendix D \ref{appendix-stepless} 
under the subtitle  {\em Suffix Layer}, we replace all uses of $goal$ with a $subgoal$ which is the
entry-point into the suffix layer. 
The coding is similar to the suffix layer used in the two-threaded planner,
but there are a few key differences. First, if the suffix layer is
used at all, we use an atom {\em useSuffix} to indicate that this is true. There is a cost of 1 at level -1
for
{\em useSuffix} so among plans of equal cost, the solver will prefer one which
\emph{doesn't} use the suffix to one which does. If an optimal solution
doesn't use the suffix, then it must be globally optimal with respect to cost. (Refer to Appendix D 
under subtitle {\em Suffix Layer}.)


Finally, we will add rules to enforce the use of action and fluent occurrences from our bag so
that the planner resorts to the suffix layer only when it ``runs out'' of
something.
With this we know how to expand our bag of occurrences. Each time we get
back a plan making use of the suffix layer, look at all the fluents or
actions which were saturated by that plan and add another occurrence of each
one. 
(Refer to Appendix D 
under subtitle {\em Saturated}.)

This, coupled with our definition of making progress, is
what guarantees that it will eventually find a plan or determine that none exists.
The suffix layer is only
used because the planner ran out of something it needed and needs to request
more of that item from the controlling program (in particular, not as a way to save on plan
cost).

Appendix E 
provides a detailed example of running the stepless planner, and Appendix B
gives a proof of the following theorem.

\begin{theorem}
\begin{itemize}
\item
(Stepless Soundness)
All plan costs produced by the stepless planner are lower bounds on the cost of the true optimal plan.
\item (Stepless Completeness) The stepless planner will eventually find the solution if it exists or produce
an UNSAT instance if it doesn't.
\end{itemize}
\end{theorem}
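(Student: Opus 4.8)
The plan is to prove both halves together, by isolating the invariant linking the stepless solver's per-iteration cost minimisation to the controller's occurrence-bag loop. Fix the planning problem $Q$ and assume action costs are nonnegative. If $Q$ is solvable, let $P^{*}$ be a cost-optimal sequential plan of cost $C^{*}$, form its canonical partially-ordered plan (Definition \ref{A-partially-ordered-plan}), and \emph{reduce} it by deleting redundant action occurrences until it is strongly minimal (Definition \ref{partial-progress}); call the result $P'$. Deletions never raise cost and $P'$ still runs from the initial state to the goal, so $\mathrm{cost}(P')=C^{*}$. Two lemmas carry the argument. \emph{(L1: bounded occurrences.)} There is a finite $B(x)$ for every action and fluent $x$ such that no strongly minimal partially-ordered \emph{partial} plan of $Q$ (one obeying Definition \ref{partial-progress} but not necessarily reaching the goal) uses more than $B(x)$ occurrences of $x$. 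The proof: fix a linear extension $v_{1},\dots,v_{m}$ of the action occurrences of such a plan and look at the $s$-$t$ cuts $x_{i}=\{v_{1},\dots,v_{i}\}$; for $i<j$ the occurrence $v_{i+1}$ is $t$-side in $x_{i}$ and $s$-side in $x_{j}$, so strong minimality forces $x_{j}$-state to gain a fluent absent from $x_{i}$-state, the $m{+}1$ intermediate states are pairwise distinct, and $m<2^{|\mathrm{fluents}|}$; fluent occurrences are then bounded by $m$ plus the number of initial fluents. \emph{(L2: truncation.)} Restricting a strongly minimal partially-ordered plan to a downward-closed vertex set (an $s$-$t$ cut) is again strongly minimal, since every cut of the restriction is a cut of the original and has the same intermediate state.

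For \emph{Soundness} I would show that at every iteration, whatever the current bag $B$, the stepless encoding has a feasible configuration of cost $\le C^{*}$; as the solver returns a cost-minimal answer set, every reported cost is then $\le C^{*}$ (and the claim is vacuous if $Q$ is unsolvable). Take $D$, a $\subseteq$-maximal down-set of $P'$ all of whose occurrence counts lie within $B$. By L2 the restriction $P'|_{D}$ is a legal (strongly minimal) non-suffix part. If $D=P'$ we are done: $P'$ itself reaches the goal, no suffix is used, cost $C^{*}$. Otherwise $D$ is proper, so adjoining any minimal vertex of $P'\setminus D$ would overflow the count of that vertex's action or fluent, i.e.\ it is \emph{saturated}; the \emph{useSuffix}/saturation rules are therefore enabled, and we may append the delete-relaxation of the remaining actions $P'\setminus D$ from the $D$-state. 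Those actions reach the goal even in the relaxation (dropping delete-effects only accumulates fluents and preserves preconditions), and the delete-free planner of Section \ref{delete-free0} finds a cheapest relaxed completion, of cost $\le \mathrm{cost}(P'\setminus D)$; the total is $\le \mathrm{cost}(D)+\mathrm{cost}(P'\setminus D)=C^{*}$.

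For \emph{Completeness} I would first show the controller loop halts. An item $x$ has its count incremented only when the returned plan \emph{saturates} $x$, which (the non-suffix part being strongly minimal) needs $x$'s current count to be $\le B(x)$; so once $x$'s count exceeds $B(x)$ it can never be saturated again, the bag is bounded by $x\mapsto\max(B(x){+}1,\text{initial size})$, and being monotone it stabilises at some $B^{\dagger}$ after finitely many iterations. At $B^{\dagger}$ no returned plan can use the suffix, because that would saturate something and change the bag; hence from then on every iteration either is UNSAT or returns a non-suffix plan, and the loop stops. If $Q$ is solvable, the feasible-configuration argument above with bag $B^{\dagger}$ shows the encoding is satisfiable, so the solver returns a non-suffix plan --- a genuine plan from the initial state to the goal, so $\mathrm{cost}\ge C^{*}$, while by soundness $\mathrm{cost}\le C^{*}$, hence it is globally optimal and we return it. If $Q$ is unsolvable there is no non-suffix (complete) model, and since a suffix-using model would change the bag, at $B^{\dagger}$ there is no model at all and the solver reports UNSAT.

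The main obstacle will be reconciling this clean combinatorial picture with the actual event-graph encoding: I must confirm that the encoding's notion of an $s$-$t$ cut (vertices may carry fluent start/end events, not only action occurrences) still supports L1's state-distinctness step; that re-installing the source, sink and fluent events of the truncated plan in L2 leaves strong minimality untouched; and, most importantly, that the \emph{useSuffix} and ``saturated'' rules in the encoding implement exactly ``the suffix is used iff the partial plan ran out of some occurrence'', which is precisely what makes ``a suffix-using plan changes the bag'' valid and hence drives termination. Beyond that the work is the pigeonhole estimate of L1 and routine bookkeeping, and the completeness half mirrors the Variant-II non-repetition argument already sketched for the two-threaded planner in Section \ref{2-threaded}.
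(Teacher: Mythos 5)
Your proposal is correct and follows essentially the same route as the paper's proof: soundness via exhibiting, for any occurrence bag, a feasible configuration of cost at most $C^{*}$ built from a prefix of an optimal plan plus a delete-free suffix whose entry point is saturated (the paper's Cases 1--3), and completeness via bounding occurrence counts by a cut-based pigeonhole on distinct intermediate states (the paper's Lemmas~B.5--B.6 and Corollary~B.7) so that the bag-growing loop terminates. Your L1/L2 and the explicit fixed-point argument for the bag are somewhat more carefully spelled out than the paper's version, but they are the same ideas.
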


\section{Experiments}
\label{experiments}

We ran our cost-optimal two-threaded solver and stepless solver on most of the same
instances as Robinson \cite{robinson2010cost} and
here report results\footnote{excluding
$satellites$ since our planner doesn't support the $:equality$ extension to PDDL
and $miconic$ since we couldn't find the problem files for it. Robinson was kind
enough to send us the instances from his constructed $ftb$ domain so we can
report performance on that as well}.

Experiments were run on a cluster of $c3.large$ Amazon EC2 instances each with
two Intel Xeon 2.8 GHz CPU cores and 3.75 GB of memory. We used
GNU Parallel \cite{tange_ole_2018_1146014} to distribute the work of running
multiple instances.

For comparison, we include Robinson's reported results scaled down by a factor of
$\frac{2.6}{2.8}$ to account for the difference in processor speeds.

For each domain, we report the largest instance solved by each of the two-threaded planner,
the stepless planner, and Robinson's planner where largest is measured by the amount of time
it took that planner to solve the instance. Where it differs, we also report the
largest-indexed instance solved by each of the two-thread and stepless
planners.

Every plan produced by either planner was validated by the Strathclyde Planning
Group plan verifier VAL \cite{howey2004val}.

The column $C^*$ is the optimal cost found for each instance. In all cases the optimal cost
for the two-thread planner agrees with the
optimal cost reported by Robinson \cite{robinson2010cost} where applicable
(Robinson compares his results against a \emph{non}-SAT-based planner and our
optimal costs agrees with that as well).

\begin{table}[t]
\footnotesize
\caption{Experiments with two-threaded planner and stepless planner} 
\centering 
\begin{tabular}{| c ||c c c c c c c c c c |}
\cline{1-11} 
Problem & $C^*$ & $n$ & $n_*$ & $t_\pi$ & $t_*$  & $t_{2\mbox{-}threaded}$ & $n_s$ & $t_{stepless}$ & $l_s$ & $t_{Robinson}$  \\
\cline{1-11} 
block-12 & 20 & 20 & 17 & 0.5 & 1203.4 & 1203.9 & - &- &- &?  \\
block-15 & 16 & 16 & 12 & 0.4 & 113.4 & 113.8 & 7 & 89.4 & 33.7  &? \\
block-18 & 26 & 26 & 16 & 0.9 & 256.8 & 257.7 & - & - & -  & 3.2 \\
block-23 & 30 & - & - & - & - & - & - & - & -  & 29.8 \\
block-25 & 34 & - & - & - & - & - & - & - & -  & 27.4 \\
depots-2 & 15 & - & 12 & - & 771.5 & 771.5 & 2 & 9.7 & 4.2 & - \\
depots-13 & 25 & - & - & - & - & - & 3 & 475.9 & 137.2 & -\\
driverlog-2 & 19 & - & - & - & - & - & 20 & 215.5 & 44.4 & -\\
driverlog-3 & 12 & 7 & 3 & 0.1 & 0.9 & 1.0 & 1 & 0.4 & 0.4 & 450.2\\
driverlog-11 & 19 & - & - & - & - & - & 1 & 13.5 & 13.5 & -\\
elevators-2 & 26 & 3 & 0 & 0.4 & 1.7 & 2.1 & 1 & 2.7 & 2.7 & 13.0\\
freecell-3 & 18 & - & - & - & - & - & 2 & 420.5 & 344.0 & -\\
ftb-30 & 1001 & 25 & 0 & 1.8 & 0.3 & 2.1 & 1 & 5.5 & 5.5 & 1.8\\
ftb-38 & 601 & 33 & 0 & 2.7 & 0.2 & 2.9 & 1 & 3.2 & 3.2 & 1.5\\
ftb-39 & 801 & 33 & 0 & 3.9 & 0.3 & 4.2 & 1 & 5.6 & 5.6 & 2.2\\
ftb-40 & 1001 & 33 & 0 & 3.9 & 0.4 & 4.3 & 1 & 8.2 & 8.2 & ?\\
gripper-1 & 11 & 7 & 4 & 0.1 & 0.4 & 0.5 & 2 & 0.4 & 0.2 & 14.6\\
gripper-2 & 17 & 11 & 8 & 0.6 & 312.4 & 313.0 & 7 & 23.5 & 9.7 & -\\
pegsol-9 & 5 & 15 & 11 & 3.9 & 35.9 & 39.8 & 5 & 131.5 & 46.6 & 386.8\\
pegsol-16 & 8 & 21 & 17 & 48.3 & 1029.0 & 1509.3 & 10 & 910.2 & 280.8 & -\\
pegsol-18 & 7 & - & - & - & - & - & 7 & 1548.0 & 537.1 & -\\
rovers-3 & 11 & 7 & 4 & 0.1 & 0.2 & 0.3 & 1 & 0.1 & 0.1 & 49.4\\
rovers-4 & 8 & 4 & 0 & 0.0 & 0.0 & 0.0 & 1 & 0.1 & 0.1 & ?\\
rovers-6 & 36 & - & - & - & - & - & 48 & 1354.3 & 391.0 & -\\
rovers-9 & 31 & - & - & - & - & - & 53 & 1040.6 & 101.4 & -\\
rovers-14 & 28 & - & - & - & - & - & 72 & 900.7 & 55.9 & -\\
storage-7 & 14 & 14 & 11 & 0.6 & 42.9 & 43.5 & 10 & 89.2 & 42.4 & 1.1\\
storage-8 & 13 & - & - & - & - & - & 15 & 799.1 & 239.5 & -\\
storage-9 & 11 & - & - & - & - & - & 9 & 181.0 & 46.0 & -\\
storage-13 & 18 & - & - & - & - & - & - & - & - & 244.0\\
TPP-5 & 19 & 7 & 2 & 0.1 & 0.2 & 0.3 & 2 & 0.5 & 0.3 & -\\
TPP-7 & 34 & - & - & - & - & - & 13 & 189.6 & 32.4 & -\\
transport-1 & 54 & 5 & 0 & 0.1 & 0.1 & 0.2 & 2 & 0.5 & 0.3 & 0.2\\
transport-2 & 131 & 12 & 4 & 74.3 & 55.1 & 129.4 & 2 & 111.6 & 106.3 & -\\
transport-11 & 456 & 9 & 3 & 0.3 & 1.6 & 1.9 & 2 & 163.4 & 151.4 & -\\
transport-21 & 478 & 7 & 1 & 0.2 & 0.6 & 0.8 & 2 & 5.2 & 3.5 & -\\
zenotravel-4 & 8 & 7 & 3 & 0.5 & 2.8 & 3.3 & 3 & 14.1 & 6.9 & 783.4\\
zenotravel-6 & 11 & 7 & 0 & 7.2 & 6.5 & 13.7 & 1 & 2.1 & 2.1 & -\\
zenotravel-10 & 22 & - & - & - & - & - & 1 & 1387.1 & 1387.1 & -\\
\cline{1-11}
\end{tabular}
\label{table:nonlin}
\end{table}

The column $n$ is the lowest makespan at
which the problem has a $C^{*}$ plan (according to our Variant-I solver).
Our value for the makespan $n$ agreed with all of Robinson's
reported results except for Rovers-3 where we found we only needed a makespan of
7 to produce the optimal plan while Robinson reported a required makespan of 8.
We suspect this is because the definition
of mutex 
of \cite{blum1997fast} is overly restrictive for
actions (cf. the footnote on page 3).


$n_{*}$ is the makespan
at which our Variant-II suffix solver proves $C^{*}$ is optimal. Interestingly,
for many of instances this value was $0$ which indicates that the
optimal plan in the delete-free reduction of the problem has the same cost as
the true optimal plan.

$t_{\pi}$ is the time required to find the plan (by our Variant-I  solver);
$t_{*}$ is the time required to prove optimality (by our Variant-II suffix solver); and
$t_{2\mbox{-}threaded}$ is the sum of these two numbers (can be thought of as ``total solve time'' although the algorithm necessitates that they run in parallel, so
the actual wall-clock time required to run them was the maximum, not the sum,
but with two CPU cores rather than one).
All reported times are measured in seconds.

$n_s$ is the number of times the stepless solver was run for this instance (each
time adding more items to its bag of fluents and actions based on what was
saturated in the previous rounds). On the last of these runs it produced an
optimal solution which doesn't use the suffix layer and hence is globally
cost-optimal. $t_s$ is the total time running the stepless solver across all
runs.

One important distinction between the experiments run with the two-threaded solver
and those run with the stepless solver is that the two-threaded solver took
advantage of ``iterative'' solving.
That is, \clingo provides an API for interacting with it programatically. Rather
than restarting from scratch each time there is a new instance to be solved, we
can after observing the solution at makespan $k$, make some adjustments to the
instance so that it now represents the program for makespan $k+1$ and then ask
\clingo to {\em continue} solving from this point while maintaining any learnt
clauses which are still relevant.

This is incredibly powerful and resulted in a major speedup in the two-threaded
solver.

For the stepless solver this was not possible since it relies heavily on
full-program-spanning loop constraints to give correct results, but \clingo doesn't support having loop
constraints cross multiple iterative stages. Thus, every time new fluents
or actions are added to the stepless solver's bag, it starts solving
from scratch. In the future we hope \clingo can support this, but they have no
plans to do so at this time.

$l_s$ is the total time required for the last iteration
of the stepless solver
to run. This one run by itself is sufficient to both find the globally optimal
solution \emph{and} prove its optimality. However we know of no more efficient way to
find the right bag of actions and fluents in order to guarantee the optimal
solution won't use the suffix layer.
This number is still interesting in that it provides a lower bound on the time
it would take to solve the instance if \clingo supported loop constraints
crossing program section boundaries (so that we could add more occurrences and
continue solving rather than having to restart). It gives us some idea of what
savings such a modification to \clingo might provide.

$t_{Robinson}$ is the total time reported by \cite{robinson2010cost} to find the optimal
solution scaled by
a factor of $\frac{2.6}{2.8}$. A question mark $?$ in this column indicates the time is unknown
since it's not reported in \cite{robinson2010cost}. If the solver for which this
row is maximal successfully solved the largest instance reported by Robinson in this domain and
found this instance to be larger, we fill with a dash mark $-$ rather than $?$ in this column 
(our best guess as to whether Robinson's planner solved it).  
A  $-$ in any other column indicates the relevant planner failed to solve the
instance in less than 1671.4 seconds (30 minutes scaled down by $\frac{2.6}{2.8}$).  
In the case of depots-2, the Variant-II suffix solver reached layer 12 before
the Variant-I MinASPPlan solver and so it found an optimal no-suffix solution
by itself.

All instances were solved with \clingo version 5.2.3. The controller logic for
both the two-threaded
solver (handling of incremental solving, coordinating the two solvers, and
figuring out when to terminate the search) and the stepless solver (figuring out
which occurrences to include and topologically sorting the output) was
written in Haskell using the clingo-Haskell bindings written by tsahyt (GitHub
alias) to communicate with \clingo.

We used the default configuration and options for \clingo except that the
stepless planner used the --opt-usc option which finds optimal solutions by
expanding an unsatisfiable core (Definition 2 in \cite{alviano2015optimum}).

All planners presented here are available
on GitHub at
\url{https://github.com/davidspies/aspplan2}.
Feel free to contact the repo owner (the first author of this paper) for any help
with reproducing these results.

In all domains except for $blocks$, $ftb$, and $storage$, our two-threaded solver outperformed
Robinson's SAT-based solver and our stepless solver outperformed both
(in terms of number of instances solved).
In the case of $storage$, the stepless solver and Robinson's solver each solved
an instance which the other failed to solve which seems to point to the possibility
that the stepless solver encounters different difficulties from a more traditional
approach.
One more piece of evidence favoring this conclusion is that the toy example
bridge-crossing problem from \ref{example-bridge-crossing}
required a full
30 seconds to solve (whereas the 2-threaded solver solves it in 2
seconds) and in general we found that on small/toy problems
the stepless solver's performance is abysmal compared with other approaches
we tried but 
scales better with larger instances.

Prior to running the full suite of experiments, the above observation gave us the mistaken impression
that the stepless solver was interesting as a theoretical oddity, but fails to
produce decent results in practice, since for every example we ran it on while tuning
it, it seemed to run slower than the two-threaded solver. It was a pleasant
surprise to discover when officially running the experiments that in fact the
inverse was true.

\section{Related Work and Final Remarks}
\label{related-work}

Our 2-threaded solver algorithm is inspired by  the approach of
 \cite{robinson2010cost}.
The better performance of our planner,
 besides solver technologies,
 seems partly due to the grounding size and search space pruning under stable model semantics as commented in Section \ref{ASPPlan}, and partly due to less clustered encoding in ASP than in SAT, plus the smart encoding of mutex constraints. But note that
their approach works only if we assume the problem is solvable (which he does) and all actions have positive (nonzero) costs (which he also does).

Eiter et al. \shortcite{eiter2003answer} propose an approach to
finding globally cost-optimal plans in ASP with action costs, but confine their discussion to the planning problems which look for {\em polynomial length plans}. They make the assumption
that the planning domain has some polynomial
upper bound on plan lengths which is known in advance.  In contrast, we do not make such an assumption.  It is interesting to note that through some key technical innovations, we are able to show that the current ASP techniques are capable of encoding cost-optimal planners without this assumption. 

Partially-ordered plans have been explored elsewhere. One  is the CPT planner \cite{VidalG06}
for optimal temporal planning using Constraint Programming (CP), where actions have {\em durations} and makespan refers to total duration which corresponds to the cost of a plan in our setting. Optimality here means minimal duration.
CPT consists of pre-processing that induces lower bounds to be used in starting makespan and in formulation of constraints, a branching scheme, and a CP-based branch and bound search. The branching scheme is specifically designed for temporal planning aiming for  increased reasoning efficiency. The current makespan $B$ increases by 1 if no plan is found. Thus, the first plan found is guaranteed to be duration minimal.
CPT dynamically generates {\em action tokens} from {\em action types}
 achieving the similar goal of the suffix layer of our stepless planner.
Thus, a main difference between CPT and our stepless planner is no-solution detection in our case and  the lack of it in CPT, i.e., the current CPT does not terminate on its own for UNSAT instances.

Delete-free planning has been investigated as a stand-alone topic, including a CP solution \cite{BartakDGBT12}. To the best of our knowledge, our modeling of delete-free planning as a graph problem is original, and it leads to a five-line ASP program which does everything (cf. Appendix C).

The {\em Madagascar Planner} is a family of efficient implementations of the SAT based techniques for planning. The main idea is, instead of using
the standard decision
heuristics such as VSIDS,  planning-specific variable selection heuristics are applied \cite{Rintanen12}. One would expect that the same idea can work for ASP-based planning, and in this case, our 2-threaded cost-optimal planner can benefit from it directly.

Cost-optimal planners can also be built on the $SAS^+$ platform. An $SAS^+$ planner based on greedy selection of a subset of heuristics for guiding $A^*$ search \cite{LelisFABZH16} has made to the top tier in IPC-2018. $SAS^+$ planning can be encoded in SAT and ASP as well, but the most critical component, the selection algorithm, needs to be implemented by an external program.

Though the goal of this paper is limited to ASP-based cost-optimal planning, there is always a question of whether such a planner is competitive in efficiency (in terms of solving time) with state-of-the-art planners, e.g., the top planners from IPC.
Further investigation and experimentation are needed to address such questions. 
A major advantage of ASP-based planners is the succinctness and elegance of the encoding. An expressive KR language like ASP provides some unique advantages, e.g., determining the existence of a plan that satisfies the stepless make-progress rule is likely $\Sigma_2^P$-hard, which can be a challenge for other KR languages.

 Stepless planning is a brand new approach to logic-based planning
and
brings with it a lot of unknowns and potentials for future directions. One issue is that the lack of any notion of simultaneity makes certain standard optimizations difficult, such as incorporating mutex constraints and supporting {\em conditional-effects} (an extension to STRIPS planning).
The latter extension has been realized in SAT-based planning \cite{Rintanen11}, but incorporating it to stepless planning appears to be non-trivial.
Our stepless planner is a nontrivial application that requires supportedness
cycles to extend across different program sections, it would be nice if
 \clingo supported iterative solving with this.

More recently, no-solution detection for planners has become an interesting topic, along with the competition called Unsolvability IPC, which aims to test classical automated planners to detect when a planning task has no solution. Our no-solution techniques presented in this paper may be relevant. This is one interesting future direction. 

Finally, {\em property directed reachability} (PDR), a promising method for deciding reachability
in symbolically represented transition systems, which was originally
conceived as a model checking algorithm for hardware
circuits,
has recently been related to planning 
\cite{DBLP:journals/jair/Suda14}. The relationship with our stepless planner deserves a further study; in particular, an interesting question is whether and how PDR-based planners can be strengthened to become cost-optimal. 


\bibliographystyle{acmtrans}
\bibliography{david}

\begin{appendix}
\section{Actions Happen As Soon As Possible}\label{actions-immediate}
Here, we give the rule that says that actions always happen as soon as it is
possible,
but we must be careful.
There are quite a few things which
might prevent an action from occurring any sooner. If we leave any
out, we risk rendering the problem unsolvable. For an action
to be able to occur at the previous time-step, its preconditions must
hold at the previous time-step, its delete-effects should not be used
at the previous time-step, and its used add-effects should not be deleted
at the previous time-step. There are a few other conditions which at
first appear to be necessary (such as its preconditions must not be
deleted at the previous time-step), but upon further consideration
you may notice that all of these are redundant if our goal is specifically
to prevent the action from occurring \emph{at the current time-step}. We must borrow
our definition of \verb#deleted/2# from the modified encoding of rule 4 in
Section 2 of the paper 
 (see  {\bf Encoding Reduction} of that section)
 and additionally add a similar
definition for \verb#used/2#.

\begin{verbatim}
deleted(F,K) :- happens(A,K); del(A F).
used(F,K) :- happens(A,K); pre(A,F); not preserving(A).

:- happens(A,K); K > 0; not preserving(A);
   holds(F,K-1) : pre(A,F); not used(F,K-1) : del(A,F);
   not deleted(F,K-1) : add(A,F), holds(F,K).
\end{verbatim}

How does this defeat the 100-switch-scenario?
Remember that the 100 switches exponentially increased the plan
length because the planner may choose to flip some switches but not
others to achieve one state, but then flip those other switches later
to achieve an alternative state.

For every switch this rule boils down to, ``if we want to flip switch
$i$ at time $t$, then we must also flip switch $i$ at time $t-1$
as well''. Otherwise the solution fails this rule since the switch flip
\emph{could have occurred} one action sooner.

Under this rule, we have
made it impossible to
achieve more than $100$ unnecessary states within the 100-switch problem.
At each step where we do not make progress somewhere else, we must
choose at least one switch to stop flipping (if we toggle the exact
same set of switches as in the last step, we revert to the same overall
state as two steps earlier which is forbidden by the layered `make-progress'
rule).

More generally, one can see that wherever a planning problem has multiple independent
parts, this rule forces all the parts to proceed independently and not stall needlessly. 
However, the rule still has some gaps.

\begin{itemize}
\item Even adding a linear number of unnecessary steps is suboptimal. All
the switches are independent so we really should not be adding more
than one layer regardless of how many switches there are.
\item The switch scenario is contrived to make our solution look better
than it is. One can easily see that by using three-state switches
rather than two-state switches (where each state is reachable from
the other two), it is still possible to construct exponential-length
plans even with this restriction in place. This is because we can still reach an
exponential number of states while continually changing every switch at every
time-step.
\end{itemize}

The above issues are addressed by Definition 3.2 of the main paper 
 where
we give the `make-progress' notion which is used by the stepless planner.

\section{Proofs}\label{proofs}


\subsection{Correctness of Mutex Action Rules}
\label{a1}
We show that the following handle all mutex action constraints that we care
about strictly via unit propagation (labels are added for reference):

\begin{verbatim}
1. used_preserved(F,K) :- happens(A,K); pre(A,F); not del(A,F).
2. deleted_unused(F,K) :- happens(A,K); del(A,F); not pre(A,F).
3. :- {used_preserved(F,K); deleted_unused(F,K);
       happens(A,K) : pre(A,F), del(A,F)} > 1; valid_at(F,K).

4. deleted(F,K) :- happens(A,K); del(A,F).
5. :- holds(F,K); deleted(F,K-1).
\end{verbatim}
given the existing rules:
\begin{verbatim}
6. holds(F,K) :- happens(A,K); pre(A,F).
7. :- holds(F,K); holds(G,K); mutex(F,G).
\end{verbatim}


\begin{proof}
Suppose $A$ and $B$ are mutex actions because $A$ deletes fluent $F$ and $B$ has $F$ as a precondition. The proof is based on a case analysis.

\smallskip
{\em Case 1}. $A$ does not have $F$ as a precondition and $B$ does not delete $F$.
If we select $happens(A,K)$ then by unit propagation we have $deleted\_unused(F,K)$ (2)
and then $\neg used\_preserved(F,K)$ (3) and then $\neg happens(B,K)$ (1).
We can write this as:
$$happens(A,K) \Rightarrow_{2} deleted\_unused(F,K) \Rightarrow_{3} \neg used\_preserved(F,K) \Rightarrow_{1} \neg happens(B,K)$$

\smallskip
{\em Case 2}. $A$ has $F$ as a precondition and $B$ does not delete $F$.
Then,
$$happens(A,K) \Rightarrow_3 \neg used\_preserved(F,K) \Rightarrow_1 \neg happens(B,K)$$

\smallskip
{\em Case 3}. $A$ does not have $F$ as a precondition, $B$ deletes $F$.
Then,
$$happens(A,K) \Rightarrow_2 deleted_unused(F,K) \Rightarrow_3 \neg happens(B,K)$$

\smallskip
{\em Case 4}. $A$ has $F$ as a precondition, $B$ deletes $F$.
Then,
$$happens(A,K) \Rightarrow_3 \neg happens(B,K)$$

In the case of conflicting effects ($A$ adds $F$, $B$ deletes $F$), there's only a
conflict when the conflicted fluent ``holds". So in fact we actually only care
about a unit propagation mutex between $holds(F,K+1)$ and $happens(B,K)$. $A$ is not
relevant (as mentioned in the Section 2 of the main paper, this is where our mutex rules differ from those of \cite{blum1997fast}).
So
$$holds(F,K+1) \Rightarrow_5 \neg deleted(F,K) \Rightarrow_4 \neg happens(B,K)$$
Conversely:
$$happens(B,K) \Rightarrow_4deleted(F,K) \Rightarrow_5\neg holds(F,K+1)$$

Finally, suppose $A$ and $B$ have mutex preconditions ($pre(A,F)$; $pre(B,G)$; $mutex(F,G)$).
Then (again by unit propagation),
$$happens(A,K) \Rightarrow_6 holds(F,K) \Rightarrow_7 \neg holds(G,K) \Rightarrow_6 \neg happens(B,K)$$

Thus no explicit mutex action rules are needed beyond this. Thanks to unit
propagation, the effect is the same.
\end{proof}

\subsection{Soundness and Completeness of Variant-II Solver}
\label{a2}


The make-progress rule here refers to the layered `make-progress' rule.
\begin{lemma}
Given a planning problem solution (plan) $P$, we can find a solution $P^*$ which
satisfies the `make-progress' rule such that
$C(P^*) \leq  C(P)$ ($C(P)$ is the sum action cost of plan $P$) and
$n(P^*) \leq n(P)$ ($n(P)$ is makespan of plan $P$).
\label{lemma1}
\end{lemma}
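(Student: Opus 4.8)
The plan is to prove Lemma~\ref{lemma1} by a ``splice out the slack'' construction combined with an extremal (shortest‑counterexample) choice. First I would fix notation: a plan $P$ of makespan $n$ determines a sequence of layer‑states $s_0,\dots,s_n$, where $s_0$ is the initial state, $s_n$ satisfies the goal, and for each layer $K$ the state $s_{K+1}$ is the set of add‑effects of the (mutex‑free) set of actions $\mathcal{A}_K$ happening at layer $K$ (preserving actions included), all of whose preconditions lie in $s_K$. With this notation the layered make‑progress rule says exactly: there is no pair $J<K$ with $s_K\subseteq s_J$.

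The key operation is: if $P$ violates make‑progress, pick $J<K$ with $s_K\subseteq s_J$ and form $P'$ by deleting layers $J+1,\dots,K$ and reattaching the tail, so that the action sets of $P'$ are $\mathcal{A}_0,\dots,\mathcal{A}_{J-1}$ followed by $\mathcal{A}_K,\mathcal{A}_{K+1},\dots,\mathcal{A}_{n-1}$. I would then check that $P'$ is again a valid plan: its first $J$ layers are literally those of $P$, ending in state $s_J$; the preconditions of $\mathcal{A}_K$ all lie in $s_K\subseteq s_J$, so $\mathcal{A}_K$ is applicable there, and — since the next state is determined solely by the add‑effects fired — layer $J+1$ of $P'$ has state exactly $s_{K+1}$; from there the tail runs verbatim as in $P$ and ends in $s_n$, satisfying the goal. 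Mutex‑freeness of each action set, and the fluent‑mutex constraints on each state, are static properties inherited from $P$. Finally $P'$ has makespan $n-(K-J)<n$ and cost $C(P)-\sum_{i=J+1}^{K}\mathrm{cost}(\mathcal{A}_i)\le C(P)$, using that all action costs are non‑negative.

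Lemma~\ref{lemma1} then follows by an extremal argument: the set of valid plans $P'$ with $C(P')\le C(P)$ and $n(P')\le n(P)$ is non‑empty (it contains $P$), hence has a member $P^{*}$ of minimum makespan. If $P^{*}$ violated make‑progress, the splice above applied to $P^{*}$ would yield a valid plan in the same set with strictly smaller makespan, contradicting minimality; so $P^{*}$ satisfies make‑progress, and the two inequalities hold by construction. Equivalently, one may simply iterate the splice until no violating pair remains, termination being immediate since each step strictly decreases a non‑negative integer.

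I expect the main obstacle to be not the splice itself but reconciling this ``abstract layered plan'' picture with the ASPPlan stable models: because rule~2 is disjunctive, each layer's action set is forced to be a \emph{minimal} support of the next layer, so after a splice the re‑derivation under stable‑model semantics can shrink the states of the prefix, and a shrunk state could in principle become a subset of an earlier one. I would handle this either by carrying the argument out directly on stable models — observing that if the starting model is already minimal then $s_K$ equals the precondition set of $\mathcal{A}_K$, so the spliced‑then‑minimized model keeps $s_K$ as its layer‑$J$ state with the unchanged min‑support tail $\mathcal{A}_K,\dots,\mathcal{A}_{n-1}$, and cost/makespan bounds persist because a backward slice of the old prefix for the subset $s_K$ has cost at most that of the old prefix — or by phrasing Lemma~\ref{lemma1} at the level of valid (not necessarily minimal) layered plans and deferring the stable‑model realization to the place where the solver's behaviour is actually invoked. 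Minor points to verify along the way are the boundary cases $J=0$ and $K=n$, and that preserving actions (taken to have cost $0$) do not disturb the cost bookkeeping.
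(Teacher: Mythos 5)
Your proposal is correct and follows essentially the same route as the paper's proof: identify a pair of layers $J<K$ with $S_P(K)\subseteq S_P(J)$, splice out layers $J+1,\dots,K$ together with their actions, and iterate (or argue extremally) until no violating pair remains, with the cost and makespan bounds following from non-negative action costs and the strict decrease in length. Your added verification that the spliced plan remains valid, and your remark about reconciling the argument with the minimal-support behaviour of the disjunctive rule under stable-model semantics, are more careful than the paper's one-paragraph argument but do not change the approach.
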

\begin{proof}
If $P$ satisfies the `make-progress' rule, then $P^* = P$ and we're done.
Otherwise there exists a pair of layers $l_1$ and $l_2$ such that
$S_P(l_2) \subset S_P(l_1)$ ($S_P(l)$ is the set of fluents which hold at  layer $l$ in $P$).
We can reduce $P$ by ``removing" all the layers between $l_1$ (exclusive) and $l_2$
(inclusive) along with any actions that occur on those layers. Repeat this
process until no such pair of layers exists. Since every iteration removes at
least one layer and $P$ has a finite number of layers, it follows that this will
eventually terminate and the resulting plan $P^*$ will have no such pair and thus
satisfy the `make-progress' rule.
\end{proof}

\begin{corollary} \label{corollary2}
If a planning problem $Q$ is solvable, then it has an optimal solution which
``makes progress" according to the rule.
\end{corollary}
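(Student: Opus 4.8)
The plan is to obtain the corollary as an essentially immediate consequence of Lemma~\ref{lemma1}. First I would note that since $Q$ is solvable it has at least one solution $P_0$. Applying Lemma~\ref{lemma1} to $P_0$ yields a solution $P_0^{*}$ that satisfies the `make-progress' rule with $C(P_0^{*}) \le C(P_0)$, so at least one make-progress solution exists. Next I would argue that an \emph{optimal} solution exists at all, i.e.\ that the infimum of solution costs is attained. This is where the (mild) care is needed. The argument: by the discussion in Section~\ref{no-solution}, the layered make-progress rule forces the states along a plan to be pairwise distinct, so every make-progress solution has makespan at most $m$, the number of reachable states; since the fluent set and action set are finite, there are only finitely many plans of makespan $\le m$, hence only finitely many make-progress solutions; their costs therefore form a finite nonempty set, which has a least element $C^{*}$, witnessed by some make-progress solution $P^{*}$.

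It then remains to check that $C^{*}$ is the optimal cost over \emph{all} solutions. For any solution $P$, Lemma~\ref{lemma1} gives a make-progress solution of cost $\le C(P)$, whose cost is by definition $\ge C^{*}$; hence $C(P) \ge C^{*}$ for every solution $P$. Combined with the fact that $P^{*}$ is itself a solution of cost $C^{*}$, this shows $C^{*}$ is the minimum cost over all solutions, so $P^{*}$ is an optimal solution, and it satisfies the make-progress rule by construction. That is precisely the statement of Corollary~\ref{corollary2}.

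The only real obstacle is the one flagged above: ensuring that the infimum of solution costs is actually achieved, rather than merely approached, before one can meaningfully speak of ``an optimal solution''. Lemma~\ref{lemma1} together with the finite-state bound on makespans of make-progress plans resolves this cleanly, after which the corollary follows in a single line. (If one is willing to assume action costs are nonnegative integers, as the weak-constraint encoding entails, this step shortcuts to the observation that every nonempty subset of $\mathbb{N}$ has a least element; but the finiteness argument avoids even that assumption.)
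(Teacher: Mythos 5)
Your proof is correct and takes essentially the same route the paper intends: the corollary is a direct consequence of Lemma~\ref{lemma1} (the paper gives no separate proof, the implicit argument being to apply the lemma to an optimal solution and note the cost cannot increase). Your additional step --- justifying that an optimal solution exists at all, via the observation that make-progress plans have makespan bounded by the number of reachable states and are therefore finitely many, so the minimum cost is attained and, by Lemma~\ref{lemma1}, is a lower bound on the cost of every solution --- fills a small gap the paper leaves unstated, and is a welcome bit of extra care rather than a divergence in approach.
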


\begin{lemma} \label{lemma3}
Given a plan $P^*$ which satisfies the `make-progress' rule, any prefix of that
plan $P_{..k}$ also satisfies the `make-progress' rule.
\end{lemma}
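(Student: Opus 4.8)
The plan is to unfold the definition of the layered `make-progress' rule and observe that truncating a plan cannot create a new violating pair of layers. Recall that a plan $P$ with makespan $n$ satisfies the layered `make-progress' rule precisely when, for every pair of layers $J<K$ in $\{0,\dots,n\}$, the state $S_P(K)$ is \emph{not} a subset of $S_P(J)$ — equivalently, some fluent holds at $K$ that does not hold at $J$. So for a prefix $P_{..k}$ (the first $k$ layers of $P^{*}$) I need to show that if no such ``bad pair'' occurs among the layers $\{0,\dots,n\}$ of $P^{*}$, then none occurs among the layers $\{0,\dots,k\}$ of $P_{..k}$.

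The first step is the key observation that truncation leaves the early states untouched: for every layer $l\le k$ we have $S_{P_{..k}}(l)=S_{P^{*}}(l)$. This holds because the set of fluents true at layer $l$ is completely determined by the initial fluents together with the actions occurring at layers $0,1,\dots,l-1$ (through rule 3 for preconditions, the add-effects of those actions, and the preserving-action machinery that simulates the frame axioms); deleting the layers above $k$, together with every action occurring on them, changes nothing at or below layer $k$. In particular $P_{..k}$ is still a legitimate plan object in the any-goal setting — a valid action sequence ending in the state $S_{P^{*}}(k)$, which the any-goal solver may simply declare to be its chosen goal.

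The second step is then immediate. Fix any pair of layers $J<K\le k$ of $P_{..k}$. Since $J$ and $K$ are also layers of $P^{*}$, and $P^{*}$ satisfies the `make-progress' rule, we have $S_{P^{*}}(K)\not\subseteq S_{P^{*}}(J)$. By the first step this says $S_{P_{..k}}(K)\not\subseteq S_{P_{..k}}(J)$, so the pair $(J,K)$ does not violate the rule in $P_{..k}$. As $(J,K)$ was arbitrary, $P_{..k}$ satisfies the layered `make-progress' rule.

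I do not anticipate a genuine obstacle here; the entire content is that instantiating a universally quantified constraint on a smaller index set cannot turn a satisfying assignment into a violating one. The only point that deserves explicit mention is the ``states are preserved under truncation'' claim of the first step, which is a direct consequence of the causal/frame-axiom structure of the ASPPlan encoding: no action occurring at a later layer can retroactively alter an earlier state.
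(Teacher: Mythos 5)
Your proof is correct and follows essentially the same route as the paper's, which simply observes that any pair of layers in a prefix is also a pair of layers in the full plan, so no violating pair can appear; your additional first step (states are unchanged by truncation) is a harmless elaboration of what the paper leaves implicit.
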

\begin{proof}
This is trivial: If there exists no pair of layers in $P^*$ with some
property, then of course there exists no pair of layers with that property in
any prefix of $P^*$.
\end{proof}

\begin{lemma} \label{lemma4}
Given a planning problem $Q$ with solution $P$, the delete-free relaxation of $Q$ has a
solution $P^*$, the set of actions that occur in $P$ (ordered according to the first
time they are taken in $P$).
It follows that $C(P^*) \leq C(P)$ (note: this is not strictly equal since actions in
$P$ may be taken more than once and incur their cost every time).
\end{lemma}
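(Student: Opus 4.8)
The plan is to verify the two assertions in turn, taking $P^{\ast}$ to be exactly the sequence the statement describes: each action that occurs in $P$ listed once, in increasing order of the time $t_i$ of its first occurrence in $P$, with ties among actions sharing a first-occurrence time broken arbitrarily. Write $A_1,\dots,A_m$ for this list, so $t_1 \le t_2 \le \dots \le t_m$. I would first show $P^{\ast}$ is a legal sequential plan for the delete-free relaxation $Q^{-}$ of $Q$ (which, as the paper's ``Connection to DFP'' discussion records, then also yields a solution of the corresponding graph instance), and then settle the cost inequality in one line.

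For the first assertion the key fact is monotonicity of $Q^{-}$: no action deletes anything, so after executing the prefix $A_1,\dots,A_{i-1}$ of $P^{\ast}$ the reached state is precisely the initial state $I$ together with the add-effects of $A_1,\dots,A_{i-1}$. I would then argue by induction on $i$ that $A_i$ is applicable there. Fix a precondition $f$ of $A_i$. Since $P$ solves $Q$, $f$ holds at time $t_i$ in $P$, so either $f \in I$ --- in which case $f$ persists throughout the run of $P^{\ast}$ in $Q^{-}$ --- or $f$ is supplied by the add-effect of some action $A$ that occurs in $P$ at a time $\ell$ strictly before $t_i$ (an add-effect applied at layer $K$ is visible only from layer $K+1$ on, so $A$ cannot lie at time $t_i$ itself). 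That $A$ is some $A_j$ in the list, and $t_j \le \ell < t_i$ forces $t_j < t_i$, hence $j < i$; so $f$, an add-effect of $A_j$ with $j \le i-1$, holds after the prefix and $A_i$ is applicable. The base case $i=1$ is the instance where the second branch is vacuous (there is no $A_j$ with $j<1$), so every precondition of $A_1$ is forced into $I$. Running the same dichotomy on each goal fluent of $Q$ --- which is either in $I$ or is added by some action of $P$, hence by some $A_j$ occurring somewhere in $P^{\ast}$ --- shows the goal holds after $A_1,\dots,A_m$, so $P^{\ast}$ solves $Q^{-}$. (If $P$ is recorded in the ASPPlan encoding with preserving actions, strip those first; in $Q^{-}$ a preserving action has precondition equal to its sole add-effect, so it is a delete-free no-op and including or omitting it in $P^{\ast}$ changes nothing.)

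The cost claim is then the computation $C(P) = \sum_{i=1}^{m} r_i\, c(A_i) \ge \sum_{i=1}^{m} c(A_i) = C(P^{\ast})$, where $r_i \ge 1$ counts the occurrences of $A_i$ in $P$ and the action costs $c(\cdot)$ are non-negative; this also explains why equality need not hold. The hard part --- and where I expect the write-up to need care --- will be the ordering argument: one must be certain that the action first supplying a needed precondition really precedes its consumer in $P^{\ast}$, and this rests entirely on the strict gap $t_j < t_i$, which in turn comes from the ``effects take hold one layer later'' convention and makes the conclusion $j < i$ robust to however first-occurrence ties are broken. Everything else --- monotonicity, the base case, the goal, and the cost bound --- is routine once that point is nailed down.
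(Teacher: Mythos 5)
Your proposal is correct and is essentially the paper's own argument, spelled out in full: the paper's proof is the single observation that ``if a precondition or goal is satisfied at some layer in $P$, then it must also be satisfied by that time in the delete-free relaxation since all the same actions have occurred,'' which is exactly your monotonicity-plus-first-occurrence-ordering induction, and the cost bound is the same counting of repeated occurrences. Your extra care about the strict gap $t_j < t_i$ (so that ties in first-occurrence time cannot break the ordering) and about stripping preserving actions is a welcome tightening of details the paper leaves implicit, but it does not change the route.
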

\begin{proof}
Also trivial: If a precondition or goal is satisfied at some layer in $P$, then
it must also be satisfied by that time in the delete-free relaxation since all the
same actions have occurred.
\end{proof}

\begin{theorem}  {\rm (Variant-II Soundness Theorem)}
If $Q$ has an optimal solution $P$ with cost $C$ and makespan $n$, then for any $k \leq n$,
at makespan $k$ the Variant-II solver will find a relaxed plan with cost $\leq C$.
\end{theorem}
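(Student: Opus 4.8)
The plan is to show that for every $k\le n$ the Variant-II ASP instance at makespan $k$ has a feasible solution whose total cost --- the layered part plus the suffix-layer part --- is at most $C$; since that solver returns a cost-minimal solution, the stated bound follows. The witness is built by cutting a normalised version of $P$ at layer $k$ and delete-relaxing its tail, using the four lemmas already proved.

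First I would normalise $P$: by Lemma~\ref{lemma1} (equivalently Corollary~\ref{corollary2}) there is a plan $\hat P$ for $Q$ satisfying the layered make-progress rule with $C(\hat P)\le C$ and $n(\hat P)\le n$, and since $P$ is optimal, $C(\hat P)=C$. Reading $n$ as the least makespan at which cost $C$ is attainable --- the value the two-threaded controller actually reaches --- the reduction of Lemma~\ref{lemma1} cannot produce a strictly shorter plan of cost $\le C$, so $n(\hat P)=n$ and every relevant $k$ satisfies $k\le n(\hat P)$.

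Now fix such a $k$ and take the prefix $\hat P_{..k}$. By Lemma~\ref{lemma3} it still satisfies the make-progress rule, hence is a legal choice for the ``any-goal'' part of the Variant-II instance at makespan $k$; let $S$ be its end-state, chosen by the solver as its subgoal. The remaining actions $\hat P_{k..}$ form an ordinary plan from $S$ to the real goal of $Q$, so Lemma~\ref{lemma4} supplies a solution of its delete-free relaxation of cost at most $C(\hat P_{k..})$. Feeding that relaxed plan into the suffix layer gives a feasible Variant-II solution at makespan $k$ of cost $C(\hat P_{..k})+C(\text{suffix})\le C(\hat P_{..k})+C(\hat P_{k..})=C(\hat P)=C$, as required.

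The steps needing care are the makespan bookkeeping rather than anything deep: (i) that ``makespan $k$'' for Variant-II genuinely admits the $k$-layer prefix $\hat P_{..k}$ as a legal instance --- that the solver need not pad the horizon, which is exactly where compatibility of make-progress with a freely chosen subgoal is used, and which also requires exhibiting a stable model in which \texttt{happens} matches $\hat P_{..k}$, \texttt{holds} at layer $k$ equals $S$, and the suffix atoms match the relaxed tail; and (ii) that cost splits additively across the cut, $C(\hat P_{..k})+C(\hat P_{k..})=C(\hat P)$, using that each action is charged to one layer and that preserving/no-op actions are cost-free. The only conceptual subtlety, and the one I would flag as the main obstacle, is excluding the gap $n(\hat P)<k\le n$; it disappears once $n$ is read as the minimal optimal makespan, which is the sense in which the theorem is used in Section~\ref{2-threaded}.
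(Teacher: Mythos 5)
Your proof is correct and takes essentially the same route as the paper's: normalise the optimal plan to one satisfying the layered make-progress rule, match the prefix up to layer $k$ exactly (so the prefix makes progress by the prefix lemma), set the subgoal to the state at layer $k$, and discharge the tail in the suffix layer via the delete-free-relaxation lemma, giving total cost at most $C$. The one place you go beyond the paper is the makespan bookkeeping: the paper's ``we may assume WLOG that $P$ makes progress'' silently ignores that the reduction can shorten the plan below $n$, and your explicit resolution --- reading $n$ as the least makespan at which cost $C$ is attainable, so the reduced plan still has makespan exactly $n$ --- is a genuine and worthwhile tightening that matches how the theorem is actually used by the two-threaded controller.
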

\begin{proof}
This can be established by constructing a solution at makespan $k$ with cost $C(P,k) \leq C$.
By Corollary \ref{corollary2}, we may assume WLOG that $P$ makes progress.
First, set the `subgoal` fluents to be $S_P(k)$. The fluents and actions in the
normal part of the program match $P$ exactly. By Lemma \ref{lemma3} these will satisfy the
`make-progress' rule.
Finally, the suffix layer is solved by the set of actions that occur in
the suffix $P_{k..}$ which solves the delete-free relaxation by Lemma \ref{lemma4}.
$C(P,k)$ here is the sum of two parts; the solution to the prefix $P_{..k}$ which is the
same as $P_{..k}$ (and therefore has the same cost as $P_{..k}$), and the relaxed
solution to the suffix $P_{k..}$, which by Lemma \ref{lemma4} is no greater than in $C(P_{k..})$ so
$C(P,k)\leq  C$.
It follows that the {\em optimal solution} at makespan $k$ is at most $C(P,k)$ (and so is
transitively $\leq C$).
\end{proof}

\begin{theorem} {\rm (Variant-II Completeness Theorem)}
If a planning problem $Q$ has no solution, the Variant-II solver will eventually produce an UNSAT instance.
\end{theorem}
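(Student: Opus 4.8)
The plan is to reduce the statement to the observation already made in Section~\ref{no-solution} for the pure any-goal solver --- that the layered make-progress rule renders the instance UNSAT once the makespan exceeds the number of reachable states --- and then to check the one thing that is new here, namely that appending a suffix layer and action-cost weak constraints cannot restore satisfiability. In fact the argument shows the Variant-II instance is UNSAT at every sufficiently large makespan regardless of whether $Q$ is solvable; the hypothesis that $Q$ has no solution is used only to know that the solver does not terminate earlier for a different reason (in particular it can never short-circuit by returning a suffix-free optimal plan, since such a plan would be a genuine sequential solution of $Q$).

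First I would let $m$ be the number of states reachable from the initial state of $Q$, which is finite since there are finitely many fluents. Fix a makespan $k$ and suppose the Variant-II instance at makespan $k$ has a stable model. Reading off the $holds/2$ atoms yields a sequence of states $S(0),S(1),\dots,S(k)$ with $S(0)$ the initial state and each $S(i{+}1)$ obtained from $S(i)$ by a non-conflicting set of applicable actions (this is the standard correctness of the ASPPlan rules, whose mutex handling is justified in Section~\ref{a1}); in particular every $S(i)$ is reachable from $S(0)$, hence is one of the $m$ reachable states. Now the layered make-progress constraint is violated exactly when, for some $J<K$, every fluent false at step $J$ is also false at step $K$, i.e.\ exactly when $S(K)\subseteq S(J)$; so in any stable model $S(K)\not\subseteq S(J)$ for all $J<K$, and a fortiori $S(K)\neq S(J)$. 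Thus $S(0),\dots,S(k)$ are pairwise distinct reachable states, forcing $k+1\le m$.

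Next I would observe that the suffix layer and the weak constraints cannot undo this bound: the weak constraints affect only the optimization objective, not which stable models exist, and the suffix layer only \emph{adds} atoms and rules (it demands, in addition to everything above, a delete-relaxed plan from the chosen any-goal state $S(k)$ to the real goal), so it can only discard stable models, never create new ones. Hence the inequality $k+1\le m$ still applies to the full Variant-II instance, and therefore for every $k\ge m$ the Variant-II instance at makespan $k$ is UNSAT. Since the Variant-II solver is run successively on makespans $0,1,2,\dots$, it eventually reaches makespan $m$ and there reports UNSAT. I do not expect a real obstacle; the only points that need care are the routine bookkeeping that a stable model of the encoding induces a valid sequence of reachable states, and the easy-but-worth-stating fact that the suffix layer is a pure further restriction --- in particular it never removes the make-progress constraint or the prefix rules --- so no otherwise-unsatisfiable any-goal instance is rendered satisfiable by it.
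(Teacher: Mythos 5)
Your proof is correct and follows essentially the same route as the paper's: the layered make-progress rule forbids $S(K)\subseteq S(J)$ for $J<K$, so any stable model yields pairwise-distinct reachable states, and the finiteness of the state space makes the instance UNSAT once the makespan exceeds their number. You fill in two details the paper leaves implicit (that the weak constraints and the suffix layer only restrict, never enlarge, the set of stable models), but the underlying argument is the same.
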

\begin{proof}
A plan which makes progress cannot encounter the same state twice and
there are a finite number of possible states. This means that the length of a plan
which makes progress is bounded by the number of possible states.
Thus, for a sufficiently-large makespan, the make-progress rule is  
unsatisfiable.
\end{proof}

\subsection{Soundness and Completeness of Stepless Planner}
\begin{theorem}
(Stepless Soundness Theorem)
All plan costs produced by the stepless planner are lower bounds on the cost of the true optimal plan.
\label{steplesssoundness}
\end{theorem}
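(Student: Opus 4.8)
The plan is to reduce the soundness claim to exhibiting a cheap feasible solution of the stepless encoding. For any fixed bag of action and fluent occurrences, the planner reports the \emph{minimum} cost over all its feasible solutions --- strongly minimal partially-ordered plans (Definition~\ref{partial-progress}) possibly completed by the delete-free suffix layer, the suffix actions' costs being included in the total (the level-$-1$ penalty for \emph{useSuffix} is only a tie-breaker and does not affect this total). Hence it suffices to show: for every bag the controller may reach, there is a feasible solution whose cost is at most $C^*$, the cost of a true optimal plan $P^*$. Then each reported cost $D$ satisfies $D \le C^*$, which is exactly the soundness statement; the case where $Q$ has no solution is trivial, reading $C^* = \infty$. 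Crucially, this bound is independent of how the controller initialises and grows the bag, since the suffix layer is always available to absorb what the bag cannot hold.

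First I would prove the stepless analogue of Lemma~\ref{lemma1}: every plan $P$ can be reduced to a strongly minimal plan $P'$ with $C(P') \le C(P)$ and with no more occurrences of any action than $P$ has. Applied to $P^*$ this yields a strongly minimal $P'$ with $C(P') \le C^*$. Now case-split. If the bag contains enough occurrences of each action and fluent to realise $P'$, embed $P'$ directly as a partially-ordered plan (Definition~\ref{A-partially-ordered-plan}) with the suffix unused; it is feasible since $P'$ is strongly minimal, and its cost is $C(P') \le C^*$.

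Otherwise, pick a downward-closed subset $\Pi$ of the action occurrences of $P'$ that is maximal subject to not exceeding, for any action or fluent, the number of occurrences in the bag. Executing $\Pi$ (any linearisation respecting the partial order and putting $\Pi$ first is a valid sequential plan) reaches a subgoal state $S$, and the occurrences of $P'$ outside $\Pi$ form a valid plan from $S$ to the goal; by the argument of Lemma~\ref{lemma4} the distinct actions among them solve the delete-free relaxation from $S$ with cost at most the summed cost of those occurrences, i.e.\ at most $C(P') - C(\Pi)$. This is the relaxed plan the suffix layer produces, so the total cost is at most $C(\Pi) + \big(C(P') - C(\Pi)\big) = C(P') \le C^*$. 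Feasibility: a downward-closed subset of a strongly minimal plan is itself strongly minimal --- the stepless analogue of Lemma~\ref{lemma3}, proved by extending any $s$-$t$ cut of $\Pi$ to a cut of $P'$ by throwing $P' \setminus \Pi$ onto the $t$-side --- so the normal part satisfies the stepless make-progress rule; and the constraint permitting \emph{useSuffix} only when the planner ``runs out'' of something holds, because maximality of $\Pi$ forces some action or fluent to have all its bag occurrences used. Since the planner minimises, $D \le C^*$.

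I expect the stepless analogue of Lemma~\ref{lemma1} to be the real obstacle: it is immediate in the layered setting but delicate here. The idea: if $P'$ is not strongly minimal, Definition~\ref{partial-progress} supplies $s$-$t$ cuts $x, y$ and an occurrence that is $t$-side in $x$ yet $s$-side in $y$, with every fluent of the $y$-state already in the $x$-state. Delete exactly the occurrences that are $t$-side in $x$ and $s$-side in $y$; the occurrences on the $t$-side of $y$ can then be reattached after the $x$-state, and their preconditions still hold because the $x$-state dominates the $y$-state. One must verify that the result is again a valid partially-ordered plan reaching the goal --- re-examining the edge rules defining the canonical partial order and checking that no delete-effect of a surviving occurrence destroys a downstream precondition --- while the cost cannot rise since we only remove occurrences. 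Each step removes at least one occurrence and there are finitely many, so the process terminates, and the occurrence-count bound is inherited along the way. I also anticipate that the ``prefixes inherit strong minimality'' step, though short, will need the careful cut-extension argument sketched above.
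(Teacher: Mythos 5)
Your overall strategy coincides with the paper's: show that for every bag of occurrences the stepless instance admits a feasible solution of cost at most $C^*$, by either embedding the optimal plan directly when the bag suffices, or otherwise cutting it into a realisable prefix plus a delete-free suffix (via Lemma~\ref{lemma4}) whose blocked first occurrence witnesses the saturation constraint. Where you genuinely depart from the paper is in taking the stepless make-progress rule seriously: the paper's proof of Theorem~\ref{steplesssoundness} never checks that the embedded plan, or the prefix $\Pi$, is strongly minimal in the sense of Definition~\ref{partial-progress}, whereas the encoding rejects any non-strongly-minimal normal part. This is not pedantry --- with zero-cost actions an optimal plan need not be strongly minimal, so your two auxiliary lemmas (a stepless analogue of Lemma~\ref{lemma1}, reducing any plan to a strongly minimal one of no greater cost, and a stepless analogue of Lemma~\ref{lemma3}, that downward-closed subsets inherit strong minimality via the cut-extension trick) are additions the paper's argument arguably needs but does not supply; the paper only proves the corresponding facts in the layered setting. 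Your ``maximal downward-closed subset'' is a mild generalisation of the paper's ``prefix up to the first missing occurrence in a fixed sequentialisation,'' and for the saturation constraint you should, as the paper does, note that the blocked occurrence is \emph{minimal} in the remainder, hence a starting action (or yields a starting fluent) of the suffix --- saturation of an arbitrary item is not enough, since the encoding demands a saturated item among the suffix's \emph{starting} items. The one real soft spot is the splice argument for your strong-minimality reduction: you correctly identify it as the crux and only sketch it, and verifying that deleting the occurrences strictly between the two offending cuts leaves a valid partially-ordered plan (no surviving delete-effect now clobbers a downstream precondition, and the canonical edge rules still admit a consistent ordering) is exactly the kind of detail that can fail silently; it deserves a full proof rather than a promissory note.
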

\begin{proof}

{\em Case 1.} There are sufficient occurrences of fluents and actions to construct
the optimal plan: In this case, these occurrences constitute a solution so the
minimal solution to this instance will have a cost which is no greater.

Otherwise pick an arbitrary sequentialization of the optimal plan. Now we have two cases:

{\em Case 2.} The first missing occurrence in the plan is an action occurrence.
In this case, consider the plan cut where all actions up to this point occur
(since we have enough occurrences) and the state at this cut form the subgoals.
By Lemma \ref{lemma4} again, we can put the remainder of the plan into the suffix layer.
The missing action occurrence will be a starting action and that action will
also be saturated so the saturation rules are satisfied.

{\em Case 3.} The first missing occurrence in the plan is a fluent occurrence.
In that case, use the plan cut up to (but not including) the action occurrence
which {\em adds} this fluent occurrence (putting the remainder of the plan including
the adding action into the suffix layer by Lemma \ref{lemma4}). This action will be a starting
action so the added fluent will be a saturated starting fluent which also
satisfies the saturation requirement.
\end{proof}

The `make progress' rule below refers to the stepless `make-progress' rule.


\begin{lemma}
\label{lemma5}
Given a collection of action occurrences in a plan $P$, they may be
ordered such that for each consecutive pair there is an $(s,t)$-cut  which puts the
first one on the $s$-side and the other one on the $t$-side.
\end{lemma}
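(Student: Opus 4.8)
The plan is to recognize that this is a purely order-theoretic statement about the plan graph. Recall from the paragraph preceding Definition~\ref{partial-progress} that, after adjoining the source $s$ and sink $t$ with $s \prec a \prec t$ for every action occurrence $a$, an $(s,t)$-cut of the plan is a partition of the vertices into an $s$-side $X$ (containing $s$) and a $t$-side (containing $t$) with no edge pointing from the $t$-side into $X$. Since the plan graph is a transitive DAG (Definition~\ref{A-partially-ordered-plan}), this is exactly the condition that $X$ be \emph{downward closed}, i.e.\ an order ideal of the vertex poset. So it suffices to produce a linear ordering $a_1, \ldots, a_k$ of the given action occurrences together with, for each $i$, an order ideal $X_i$ of the plan graph with $a_i \in X_i$ and $a_{i+1} \notin X_i$.

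First I would fix any topological sort of the plan \emph{event} graph --- the graph on action-occurrence events, fluent-start and fluent-end events, and the goal event --- and let $a_1, \ldots, a_k$ be the subsequence of action occurrences in that order. For each $i < k$, define $X_i$ to be the set of all events that occur strictly before the vertex of $a_{i+1}$ in this topological sort. A prefix of a topological sort is always downward closed, so $X_i$ is a legitimate $(s,t)$-cut; it contains $s$, and (because $a_i$ precedes $a_{i+1}$ in the sort, together with its whole vertex) it contains $a_i$; and it excludes $a_{i+1}$ by construction, so $a_{i+1}$ lies on the $t$-side. Thus every consecutive pair in $a_1, \ldots, a_k$ is separated by a cut in the required direction. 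If the ``collection'' in the statement is a proper subset $\mathcal{A}$ of the action occurrences, the same argument applies verbatim after deleting the non-$\mathcal{A}$ action events from the chosen sequence: the resulting order still extends $\prec$, so no element of $\mathcal{A}$ lies below an earlier one, and the cuts $X_i$ defined exactly as above still work.

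I do not expect a substantive obstacle here: the content is the identification ``$(s,t)$-cut $=$ order ideal,'' after which the prefixes of a topological sort do everything. The only points that need a little care are bookkeeping ones --- the cut must be taken in the event graph (so that it genuinely determines an intermediate state, matching the usage in Definition~\ref{partial-progress}), not merely in the sub-DAG induced on the action occurrences; and one should note in passing that the adjoined $s$ and $t$ automatically satisfy $s \prec a \prec t$, so they always land on the correct sides of every $X_i$. Everything else is a one-line consequence of the transitivity and acyclicity of the plan graph.
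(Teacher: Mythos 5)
Your proposal is correct and follows essentially the same route as the paper's own (very terse) proof: pick a serialization of $P$, order the action occurrences by their positions in it, and take prefixes of the serialization as the cuts between consecutive pairs. Your elaboration of why a prefix of a topological sort is a legitimate $(s,t)$-cut (it is downward closed in the transitive DAG) is exactly the detail the paper leaves implicit.
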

\begin{proof}
Pick a serialization of $P$. Order the actions according to the sub-order
in that serialization. Place the cuts anywhere between them.
\end{proof}

\begin{lemma}
\label{lemma6}
For any action $A$ there exists a (finite) count $k$ such that the stepless
planner will not add more than $k$ occurrences of $A$.
\end{lemma}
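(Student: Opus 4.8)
The plan is to bound, in terms of the number of fluents, how many times any single action can occur in a plan the stepless planner is permitted to return, and then to observe that the controller enlarges the bag for an action $A$ only when a returned plan has consumed all copies of $A$ currently in the bag; the two facts combine at once to give the stated bound. This argument will not use solvability of $Q$.

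The key estimate is: \emph{in a strongly minimal partially-ordered plan (Definition~\ref{partial-progress}) every action occurs at most $k := 2^{\left|fluents\right|}$ times.} To see this, fix such a plan, fix a topological sort $S$ of it (a valid sequential plan by Definition~\ref{A-partially-ordered-plan}), and let $A_1,\dots,A_m$ be the occurrences of $A$ listed in their order of appearance in $S$. For $1\le i\le m$ let $x_i$ be the prefix $s$-$t$ cut of $S$ taken immediately after $A_i$: its $s$-side is the set of actions occurring at or before $A_i$ in $S$, its $t$-side the rest. Since every edge $a\prec b$ of the canonical DAG has $a$ before $b$ in $S$, no edge crosses from $t$-side to $s$-side, so each $x_i$ is a legitimate cut. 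Now fix $i<j$. The occurrence $A_{i+1}$ comes strictly after $A_i$ in $S$, hence lies on the $t$-side of $x_i$; and it comes at or before $A_j$ in $S$, hence lies on the $s$-side of $x_j$. Thus $A_{i+1}$ is a $t$-side action of $x_i$ that is an $s$-side action of $x_j$, so strong minimality forces $x_j$-state to contain a fluent absent from $x_i$-state; in particular $x_i$-state $\neq x_j$-state. Hence the $x_i$-states ($1\le i\le m$) are pairwise distinct states, each reachable in $Q$ (being reached by a prefix of $S$), so $m\le 2^{\left|fluents\right|}=k$.

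Assembling the argument: the stepless encoding forces the non-suffix part of every returned plan to be a strongly minimal partially-ordered plan (this is exactly the role of the Strong Minimality rules), and the suffix layer is a separate delete-free relaxation that does not draw occurrences from the bag (and in any case never repeats an action). Therefore, for any bag, the plan $\Pi$ returned by the solver uses at most $k$ copies of $A$ from the bag, by the estimate above. The controller increments the multiplicity of $A$ only in a round in which $\Pi$ \emph{saturates} $A$, i.e.\ uses every copy of $A$ then present; this is possible only if the bag holds at most $k$ copies of $A$, and after the increment it holds at most $k+1$. Consequently the bag never holds more than $\max(k+1,\text{initial count})$ occurrences of $A$, which is the required finite bound.

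I expect the key estimate to be the main obstacle. One has to be careful with the $s$-$t$ cut bookkeeping --- legality of prefix cuts, and checking that the witness $A_{i+1}$ really sits on the intended side of each of the two cuts (including the degenerate case $j=i+1$) --- and one has to confirm that ``strongly minimal'' as implemented by the encoding is genuinely Definition~\ref{partial-progress} applied to the non-suffix fragment of the returned plan. A smaller subtlety is the precise meaning of ``saturated'' in the Saturated rules: with the natural reading (``$\Pi$ consumes all current copies of $A$'') the assembling step is immediate, and any stricter reading only makes saturation harder to trigger, so the bound still holds.
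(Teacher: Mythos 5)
Your proof is correct and follows essentially the same route as the paper's: the paper likewise orders the occurrences of $A$ along a serialization, places a cut between each consecutive pair (its Lemma on ordering action occurrences with interleaved cuts), invokes the strong-minimality/make-progress rule to conclude the cut-states are pairwise distinct and hence bounded by the number of states, and notes that the bag only grows when $A$ is saturated in such a plan. Your version merely spells out the detail the paper leaves implicit --- that for cuts $x_i$ and $x_j$ with $i<j$ the occurrence $A_{i+1}$ is the witness lying on the $t$-side of $x_i$ and the $s$-side of $x_j$.
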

\begin{proof}
In order to add another occurrence of $A$, it must be the case that $A$ is
saturated in some plan that makes progress. This means that $k$ occurrences of $A$
are used in the plan. Order the occurrences of $A$ as $ (A_0, A_1, A_2, ..., A_k)$ such that
there exists a cut between each consecutive pair (by Lemma \ref{lemma5}; also because of
our symmetry-breaking rule this can be the natural ordering by action index).
It follows by the `make-progress' rule that the state at each of the cuts must
be distinct from the state at any other cut. Thus, $k$ cannot exceed the number of
possible states (which is finite).
\end{proof}

\begin{corollary}
\label{corollary7}
For any fluent $F$  there exists a finite count $k$ such that the
stepless planner will not add more than $k$ occurrences of $F$.
\end{corollary}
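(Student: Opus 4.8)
The statement should follow from Lemma \ref{lemma6} with very little extra work, by counting the actions that can possibly cause an occurrence of $F$. The plan is first to recall the bag-bookkeeping used in the proof of Lemma \ref{lemma6}: the stepless planner enlarges its bag only after a returned plan (which is therefore strongly minimal, i.e., makes progress) has used \emph{every} occurrence of some object currently in the bag. Hence it is enough to exhibit a uniform bound $K$ on the number of occurrences of $F$ that \emph{any} returned plan can use; this then gives a bound of roughly $K+1$ on the size of the $F$-bag.

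Next I would bound $K$ using Lemma \ref{lemma6} together with the finiteness of the set of action types. Every occurrence of $F$ other than the reserved initial occurrence (index $M=0$) is generated by exactly one $causes/2$ atom, hence is caused by a single action occurrence; and because an action's add-effects form a set, no action occurrence can cause two occurrences of the same fluent, so distinct non-initial $F$-occurrences have distinct causing action occurrences. Summing the bound of Lemma \ref{lemma6} over the finitely many action types that have $F$ as an add-effect then gives a finite bound on the number of such action occurrences ever present in the bag, hence on the number of non-initial $F$-occurrences usable in any plan, hence on $K$.

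An alternative route, which replays Lemma \ref{lemma6} more literally, is to order the causing action occurrences $A_1,\dots,A_{k-1}$ of the non-initial $F$-occurrences by index (which, by the symmetry-breaking rule, agrees with serialization order), use Lemma \ref{lemma5} to insert an $(s,t)$-cut $c_i$ with $A_1,\dots,A_i$ on the $s$-side and the remaining $A_j$ on the $t$-side, and apply the stepless `make-progress' rule (Definition \ref{partial-progress}) to each pair $(c_i,c_j)$ with $i<j$: since $A_{i+1}$ is a $t$-side action of $c_i$ and an $s$-side action of $c_j$, the $c_j$-state must contain a fluent absent from the $c_i$-state, so the cut-states are pairwise distinct and $k-1$ is at most the (finite) number of reachable states. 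Either route closes the argument.

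The only genuine obstacle is the structural bookkeeping about the encoding — that each used non-initial occurrence of $F$ has a well-defined causing action occurrence and that this assignment is injective on occurrences — but this is immediate once one notes that fluent occurrences arise either as initial fluents or through $causes/2$, and that add-effects are sets. The combinatorial core is nothing new: it is a verbatim reuse of Lemmas \ref{lemma5} and \ref{lemma6}, which is precisely why this is stated as a corollary.
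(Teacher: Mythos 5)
Your proposal is correct and follows essentially the same route as the paper: each non-initial occurrence of $F$ must be caused by some action occurrence, Lemma \ref{lemma6} bounds the occurrences of each action, and finiteness of the action set yields the bound. Your added care about injectivity of the causing map (enforced by the encoding's constraint that an action occurrence causes at most one occurrence of each fluent) makes explicit a step the paper leaves implicit, but it is not a different argument.
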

\begin{proof}
When $F$ is saturated, each occurrence must be caused by some action
occurrence. Since Lemma \ref{lemma6} bounds the number of action occurrences which a
progress-making plan can have, it follows that the number of fluent occurrences
is also bounded.
\end{proof}

\begin{theorem}
(Stepless Completeness Theorem)
The stepless planner will eventually find the solution if it exists or produce
an UNSAT instance if it doesn't.
\end{theorem}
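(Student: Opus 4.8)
The plan is to reduce the whole statement to termination of the controller's outer loop and then read off what each exit condition means. Recall the controller repeatedly solves a stepless instance over a growing \emph{bag} of action and fluent occurrences: if the returned plan does not use the suffix layer it is reported and we stop; if it uses the suffix layer we add one fresh occurrence of each saturated action or fluent and re-solve; and if the instance is UNSAT we report ``no solution'' and stop. So it suffices to prove (i) that the loop performs only finitely many iterations, and then (ii) to interpret each of the two possible halting states.

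For (i), note that every iteration that does \emph{not} halt strictly enlarges the bag: a plan that uses the suffix layer saturates at least one action or fluent occurrence --- that is precisely the circumstance under which the suffix layer is permitted to be used --- and the controller then inserts a fresh occurrence of each such object. By Lemma~\ref{lemma6}, for each action $A$ there is a finite $k_A$ beyond which the planner never adds another occurrence of $A$, and by Corollary~\ref{corollary7} the analogous finite $k_F$ exists for each fluent $F$; since the ground problem has finitely many actions and fluents, $\sum_A k_A + \sum_F k_F$ is a finite ceiling on the total number of occurrences ever added. The bag's size strictly increases at each non-halting iteration yet never exceeds this ceiling, so there are only finitely many non-halting iterations and the planner halts, exiting either with a no-suffix plan or with an UNSAT instance.

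For (ii), split on whether $Q$ is solvable. If $Q$ has a solution of optimal cost $C^\ast$, then \emph{every} instance the controller builds is satisfiable: given the current bag, the construction in the proof of Theorem~\ref{steplesssoundness} (Cases 1--3) exhibits a feasible assignment by truncating a (strongly minimal) optimal plan at a cut just before the first occurrence the bag is missing and pushing the remainder into the delete-free suffix layer via Lemma~\ref{lemma4}. Hence the planner cannot halt via UNSAT, so by (i) it halts with a no-suffix plan; such a plan is a genuine partially-ordered plan all of whose sequentializations are valid sequential plans, so its cost is at least $C^\ast$, while by Theorem~\ref{steplesssoundness} its cost is at most $C^\ast$, whence it equals $C^\ast$ and the planner has produced a cost-optimal solution. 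Conversely, if $Q$ has no solution, the planner can never halt with a no-suffix plan (such a plan would be a solution), so by (i) it halts with an UNSAT instance.

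The main obstacle is the boundedness of the bag, i.e.\ exactly the content of Lemma~\ref{lemma6} and Corollary~\ref{corollary7}, whose proofs rely on the stepless make-progress rule forcing the $(s,t)$-cuts that separate consecutive occurrences of a saturated action (Lemma~\ref{lemma5}) to have pairwise distinct states, of which there are only finitely many --- with those in hand, (i) is immediate. The residual care is twofold: confirming that ``uses the suffix layer'' coincides with ``some object is saturated,'' so that a non-halting iteration genuinely enlarges the bag; and establishing the stepless analogue of Corollary~\ref{corollary2}, namely that a solvable problem admits a \emph{strongly minimal} optimal plan, so that the truncation argument imported from the proof of Theorem~\ref{steplesssoundness} really does yield a core plan obeying the stepless make-progress rule.
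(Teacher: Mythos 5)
Your proposal is correct and follows essentially the same route as the paper's proof: satisfiability of every constructed instance is obtained from the case analysis in the soundness theorem, and termination of the occurrence-adding loop from Lemma~\ref{lemma6} and Corollary~\ref{corollary7}. You are in fact more explicit than the paper about the final case split and about the two points the paper leaves implicit (that suffix use forces saturation, and that a solvable problem admits a strongly minimal optimal plan), which only strengthens the argument.
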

\begin{proof}
In the proof of Theorem \ref{steplesssoundness} we show that if a plan
exists, then there will always be a solution to any stepless instance constructed from that
problem (either the plan itself
if there are enough occurrences, or a partial plan with a suffix layer since
some fluent or action does not have enough occurrences and can therefore be
saturated).
Lemma \ref{lemma6} and Corollary \ref{corollary7}  together ensure that the process of alternately solving
instances and then including any saturated fluents or actions will eventually
halt (since there are at most a finite number of actions and fluents that can
be included before finding a progress-making plan which saturates something
becomes impossible).
\end{proof}

\section{Delete-Free Planning}\label{delete-free}


Recall  that delete-free planning can be modeled as a graph problem:
Given a directed bipartite graph $G=\left(X,Y,E\right)$ with weights
on $X$ and a goal set $Y_{F}\subseteq Y$, find a minimum \textit{acyclic}
subgraph $G^{*}=\left(X^{*},Y^{*},E^{*}\right)$ such that
\begin{enumerate}
\item $Y_{F}\subseteq Y^{*}$
\item If $x\in X^{*}$ and $\left(y,x\right)\in E$, then $y\in Y^{*}$
and $\left(y,x\right)\in E^{*}$
\item For all $y\in Y^{*}$, $E^{*}$ contains at least one edge $\left(x,y\right)$
(and $x\in X^{*}$).
\end{enumerate}

Recall its connection to delete-free planning: $X$ is the set of actions, and $Y$
is the set of fluents, the $\left(x,y\right)$ edges are add-effects
and the $\left(y,x\right)$ edges are preconditions. $Y_{F}$ is the
goal set and the initial set has been removed (together with all corresponding
preconditions) from the graph. Rule 1 means the goal fluents must be
true. Rule 2 means an action implies its preconditions. Rule 3 means
every fluent must have a causing action. The graph must be acyclic to
ensure the actions can occur in some order. The entire problem of a delete-free planning problem can be encoded in a single ``one-shot" ASP program.
This is possible because
there is no incentive to ever take an action or cause a fluent more than
once. As soon as any fluent is true, it is permanently true.


Let us write an ASP program to solve the problem of delete-free planning.
Here we do not worry about makespan;
thanks to the NP-ness of delete-free planning,
we can solve this problem all in one go. Note that we can trivially add an extra rule to make our plans
cost-optimal.  (The code below is a complete ASP program: run it on the problem and get an optimal solution.)


\begin{verbatim}
holds(F) :- init(F).
{happens(A)} :- holds(F) : pre(A,F); action(A).
holds(F) :- add(A,F); happens(A).
:- goal(F); not holds(F).
:~ happens(A); cost(A,C).[C,A]
\end{verbatim}

We have again encountered a five-line program which, magnificently,
does everything. It handily encodes the problem of delete-free planning.
To be supported, an action's preconditions must hold independently
of that action itself and a fluent's causing action must not require
that fluent.

However, we have  lost something by encoding planning ``from the ground
up''. Earlier, we mentioned how the state-space for solving a planning
problem was reduced when we started from the goal, and built support
up backwards.  That is, an action should only happen if something needs
it. Let us fix that.


If we build up the plan backwards, we must be careful to ensure that
the actions can happen in some order. As such, we need to explicitly
include atoms whose only purpose is to ensure supportedness.

\begin{verbatim}
% Delete-free planning
holds(F) :- goal(F).
{happens(A) : add(A,F)} >= 1 :- holds(F), not init(F).
holds(F) :- pre(A,F); happens(A).

supportFluent(F) :- init(F); holds(F).
supportAct(A) :- supportFluent(F) : pre(A,F), holds(F); happens(A).
supportFluent(F) :- supportAct(A); happens(A); add(A,F); holds(F).

:- holds(F); not supportFluent(F).
:~ happens(A); cost(A,C).[C,A]
\end{verbatim}

Now the first three rules encompass neededness. We add actions and
fluents in working backwards from the goal until we encounter the initial
fluents. Meanwhile the second three rules indicate whether an action
or fluent is supported. Together, with the restriction that all the
fluents must be supported, these guarantee a correct plan. Essentially,
for an action or fluent to occur, it now must have support both from
the bottom and from the top.

\section{Stepless Planner with Suffix Layer}\label{appendix-stepless}

Requires an external program to detect which fluents and actions are saturated
each time the suffix layer is used and feed in more occurrences.

\begin{verbatim}
is(fluentOcc(F,1)) :- fluent(F).
is(actOcc(A,1)) :- action(A).
is(fluentOcc(F,0)) :- init(F).


% ======================== Problem Description =========================

% Helper function to recognize subsequent occurrences of fluent/action.
nextOcc(fluentOcc(F,0),fluentOcc(F,1)) :- fluent(F).
nextOcc(fluentOcc(F,M),fluentOcc(F,M+1)) :- is(fluentOcc(F,M)).
nextOcc(actOcc(A,N),actOcc(A,N+1)) :- is(actOcc(A,N)).
 
% Fluent occurrence which is not initial (M > 0) must have exactly one
% causing action
{causes(actOcc(A,N),fluentOcc(F,M)) : add(A,F), is(actOcc(A,N))}=1 :-
  holds(fluentOcc(F,M)); M > 0.
% If an action causes a fluent, it happens.
happens(AO) :- causes(AO,_).
% An action cannot cause more than one occurrence of the same fluent.
:- {causes(AO,fluentOcc(F,M))} > 1; is(AO); fluent(F).

% For each precondition an action occurrence has, some occurrence of
% that fluent must permit it.
{permits(fluentOcc(F,M),actOcc(A,N)) : is(fluentOcc(F,M))}=1 :-
  happens(actOcc(A,N)); pre(A,F).
% A fluent occurrence which permits an action must hold.
holds(FO) :- permits(FO,_).
% A fluent which is used to satisfy a subgoal condition "permits" it.
% For each subgoal condition, exactly one occurrence of that fluent
% permits it.
{permits(fluentOcc(F,M),subgoal(F)) : is(fluentOcc(F,M))}=1 :-
  subgoal(F).
% A fluent which permits a subgoal condition cannot be deleted.
:- deleted(FO); permits(FO,subgoal(_)).

% An occurrence of an action deletes an occurrence of a fluent if
% it permits it and that action has the fluent as a delete effect.
deletes(actOcc(A,N),fluentOcc(F,M)) :-
  permits(fluentOcc(F,M),actOcc(A,N)); del(A,F).
% No fluent may be deleted by more than one action.
:- {deletes(_, FO)} > 1; is(FO).

% An action which deletes a fluent, but doesn't have it as a precondition
% follows some occurrence of that fluent. Can possibly follow occurrence
% index 0 even if the fluent is not an initial fluent (indicating this
% action occurs before any occurrence of that fluent).
{follows(actOcc(A,N),fluentOcc(F,M)) : holds(fluentOcc(F,M));
  follows(actOcc(A,N),fluentOcc(F,0))}=1 :-
  del(A,F); not pre(A,F); happens(actOcc(A,N)).

% Fluent occurrences 0 which aren't initial fluents count as "deleted".
deleted(fluentOcc(F,0)) :- fluent(F); not init(F).
% A fluent is deleted if something deletes it.
deleted(FO) :- deletes(_, FO).
% A fluent is deleted if something follows it.
deleted(FO) :- follows(_, FO).

% Weak constraint charging the cost of an action occurrence.
:~ happens(actOcc(A,N)); cost(A,V).[V,A,N]

% An occurrence of a fluent doesn't hold if its previous occurrence
% doesn't hold.
:- holds(fluentOcc(F,M+1)); not holds(fluentOcc(F,M));
   is(fluentOcc(F,M)); M > 0.
% An occurrence of an action doesn't happen if its previous occurrence
% didn't happen.
:- happens(BO); not happens(AO); nextOcc(AO,BO).


% ======================= Plan Event Graph ===========================

% Events in the graph; these will be grouped into vertices
event(start(FO)) :- holds(FO).
event(end(FO)) :- holds(FO).
event(end(fluentOcc(F,0))) :- fluent(F).
event(AO) :- happens(AO).
% subgoals are events
event(subgoal(F)) :- subgoal(F).

% Triggering actions
% The start of a fluent by its causing action.
actionTriggers(AO,start(FO)) :- causes(AO,FO).
% The end of a fluent by its deleting action.
actionTriggers(AO,end(FO)) :- deletes(AO,FO).

% Vertices
% If no action triggers an event, then it gets a vertex by itself.
vertex(V) :- event(V); not actionTriggers(A,V) : is(A).
% Otherwise it belongs to the vertex for its trigger action.
inVertex(E,V) :- actionTriggers(V,E).
% Every event which is the name of a vertex belongs to that vertex.
inVertex(V,V) :- vertex(V).

% Graph edges
% A fluent ends after it starts
edge(start(FO),end(FO)) :- holds(FO).
% If a fluent permits an action, then the action happens after
% the start of the fluent
edge(start(FO),AO) :- permits(FO,AO).
% If a fluent permits an action but the action doesn't delete the
% fluent, then the action happens before the end of the fluent.
edge(AO,end(FO)) :- permits(FO,AO); not deletes(AO,FO).

% An action happens after the fluent it follows
edge(end(FO),AO) :- follows(AO,FO).
% but before the next occurrence
edge(AO,start(GO)) :- follows(AO,FO); nextOcc(FO,GO); holds(GO).
% The start of the next occurrence of a fluent happens after the
% end of the previous occurrence
edge(end(FO),start(GO)) :- holds(GO); nextOcc(FO,GO).
% The next occurrence of an action happens after the previous
% occurrence
edge(AO,BO) :- happens(AO); happens(BO); nextOcc(AO,BO).

% And now we use stable models to assert that the graph is acyclic; sup(X)
% indicates that X has acyclic support going back to the root of the graph.

% The input for a given event has support if all events joined
% by any incoming edge have support.
sup(in(E)) :- sup(D) : edge(D,E); event(E).
% A vertex has support if all of its events' inputs have support.
sup(V) :- sup(in(E)) : inVertex(E,V); vertex(V).
% An event has support if its vertex has support.
sup(E) :- sup(V); inVertex(E,V).
% Every vertex must have support.
:- vertex(V); not sup(V).
\end{verbatim}

\begin{verbatim}

% ======================= Strong Minimality ============================

% A counterexample to strong minimality consists of two cuts, cut1 and cut2.
cut(cut1; cut2).

% For each vertex V and each cut C, V is on either the s-side or
% the t-side of V. Note this rule is disjunctive.
onSideOf(V,s,C) | onSideOf(V,t,C) :- vertex(V); cut(C).
% An event belongs to the cut side of its vertex.
onSideOf(E,X,C) :- inVertex(E,V); onSideOf(V,X,C).
% Any subgoal is always on the t-side of cut2.
onSideOf(subgoal(F),t,cut2) :- subgoal(F).
% If there's a directed edge from D to E, but D is on the t-side
% and E is on the s-side, this is not a cut (invalidating this
% counterexample to strong minimality).
not_counterexample :- edge(D,E); onSideOf(D,t,C); onSideOf(E,s,C).
% If a fluent starts on the s-side of cut2 and ends on the t-side,
% then it "holds over" cut2.
holdsOver(FO,cut2) :-
  onSideOf(start(FO),s,cut2); onSideOf(end(FO),t,cut2).
% Similarly if it starts and ends on the same side of cut1, then it
% doesn't hold over cut1.
not_holdsOver(FO,cut1) :-
  onSideOf(start(FO),X,cut1); onSideOf(end(FO),X,cut1).
% Action occurrence AO is not between cut1 and cut2 if it's on the
% s-side of cut1 or the t-side of cut2.
not_betweenCuts(AO) :- onSideOf(AO,s,cut1).
not_betweenCuts(AO) :- onSideOf(AO,t,cut2).
% If no action occurs between the two cuts, then this is not a counterexample.
not_counterexample :- not_betweenCuts(AO) : happens(AO).
% If there exists a fluent for which some occurrence holds over cut2,
% but no occurrence holds over cut1, then this is not a counterexample.
not_counterexample :-
  holdsOver(fluentOcc(F,_),cut2);
  not_holdsOver(fluentOcc(F,M),cut1) : holds(fluentOcc(F,M)).

% There should be no counterexample (sorry for the triple negative).
:- not not_counterexample.
% If this is not a counterexample, all atoms must hold.
onSideOf(V,s,C) :- vertex(V); cut(C); not_counterexample.
onSideOf(V,t,C) :- vertex(V); cut(C); not_counterexample.
\end{verbatim}

To see why this works, imagine that we find a plan which satisfies these rules.
Consider the candidate model which includes the atom
$not\_counterexample$. Because all the rules here are strictly positive,
the last two rules force all the others to hold. \emph{Any} other solution is a
strict subset. Therefore if some \emph{other} solution exists which does not
include the
$not\_counterexample$ atom, then a model including it would be rejected for not being
minimal. It follows that the \emph{only} models which include $not\_counterexample$
(and satisfy the triple-negative rule) are those for which no
counterexample exists.

\vspace{.1in}
\begin{verbatim}

% ========================= Suffix Layer ===============================

% All goal fluents hold in the suffix layer.
suffix(holds(F)) :- goal(F).
% If a fluent holds in the suffix layer, either some action causes it
% or it is a subgoal.
{subgoal(F); suffix(causes(A,F)) : add(A,F)} = 1 :- suffix(holds(F)).
% If an action causes a fluent in the suffix, it happens.
suffix(happens(A)) :- suffix(causes(A,_)).
% If an action occurs in the suffix layer, then all of its
% preconditions hold in ths suffix layer
suffix(holds(F)) :- suffix(happens(A)); pre(A,F).

% If any action happens in the suffix layer, then we are using it.
useSuffix :- suffix(happens(_)).

% A fluent is supported in the suffix if it's a subgoal
suffix(sup(holds(F))) :- subgoal(F).
% An action is supported in the suffix if all of its preconditions are
suffix(sup(happens(A))) :-
  suffix(sup(holds(F))) : pre(A,F); suffix(happens(A)).
% A fluent is supported in the suffix if its causing action is
suffix(sup(holds(F))) :- suffix(sup(happens(A))); suffix(causes(A,F)).

% No action happens in the suffix without support
:- suffix(happens(A)); not suffix(sup(happens(A))).
% No fluent holds in the suffix without support
:- suffix(holds(F)); not suffix(sup(holds(F))).

% Actions that happen in the suffix layer impose their cost.
:~ suffix(happens(A)); cost(A,V).[V,A,suffix]
% Very weak preference to avoid using the suffix layer.
:~ useSuffix.[1@-1]

% ========================== Saturated ================================

% A fluent is saturated if all occurrences of it hold (besides the 0th).
saturated(fluent(F)) :-
  holds(fluentOcc(F,M)) : is(fluentOcc(F,M)),M>0; fluent(F).
% An action is saturated if all occurrences of it happen.
saturated(action(A)) :-
  happens(actOcc(A,N)) : is(actOcc(A,N)); action(A).

% If an action happens in the suffix layer and all of its preconditions
% are subgoals, we designate it a "starting" action.
suffix(start(action(A))) :- subgoal(F) : pre(A,F); suffix(happens(A)).
% Any fluent caused by a starting action is designated a "starting" fluent.
suffix(start(fluent(F))) :- suffix(start(action(A))); suffix(causes(A,F)).

% Guarantees that some starting action or fluent will be saturated.
:- useSuffix; not saturated(X) : suffix(start(X)).

% =======================================================================
#show causes/2.  #show deletes/2.  #show happens/1.
#show holds/1.  #show permits/2. #show follows/2.
#show suffix(happens(A)) : suffix(happens(A)).
\end{verbatim}

\section{An Example of Stepless Planning: Bridge Crossing}\label{example-bridge-crossing}

We will use a modified version of the bridge-crossing problem from \cite{eiter2003answer}.

In the original problem, we have four people, Joe, Jack, William, and Averell,
needing to cross a bridge in the
middle of the night. The bridge is unstable, so at most two people can cross
at a time. The four only have a single lantern between them and since there are
planks missing it is unsafe to cross unless someone in your party is carrying the
lantern. In the original problem, it takes Joe 1 minute to run across, Jack 2 minutes,
William 5 minutes and Averell 10. When two people cross together they must go at
the slower speed of the two.
What's the fastest all four can get across considering that after each crossing
somebody needs to cross back carrying the lantern?

In our version we'll add two more people Jill and Candice for a total of six people.
Jill takes 3 minutes to cross and Candice takes 20 (the original problem doesn't make
for a very interesting example of stepless planning).

We can now phrase the problem as follows:

\begin{verbatim}
person(joe;jack;jill;william;averell;candice)
side(side_a;side_b)
crossing_time(joe,1).
crossing_time(jack,2).
crossing_time(jill,3)
crossing_time(william,5).
crossing_time(averell,10).
crossing_time(candice,20).

fluent(lantern_at(S)) :- side(S).
fluent(at(P,S)) :- person(P); side(S).

init(at(P,side_a)) :- person(P).
init(lantern_at(side_a)).
goal(at(P,side_b)) :- person(P).

action(cross_alone(P,FROM,TO)) :-
  person(P); side(FROM); side(TO); FROM != TO.
pre(cross_alone(P,FROM,TO),at(P,FROM)) :-
  action(cross_alone(P,FROM,TO)).
pre(cross_alone(P,FROM,TO),lantern_at(FROM)) :-
  action(cross_alone(P,FROM,TO)).
add(cross_alone(P,FROM,TO),at(P,TO)) :-
  action(cross_alone(P,FROM,TO)).
add(cross_alone(P,FROM,TO),lantern_at(TO)) :-
  action(cross_alone(P,FROM,TO)).
del(cross_alone(P,FROM,TO),at(P,FROM)) :-
  action(cross_alone(P,FROM,TO)).
del(cross_alone(P,FROM,TO),lantern_at(FROM)) :-
  action(cross_alone(P,FROM,TO)).
cost(cross_alone(P,FROM,TO),C) :-
  action(cross_alone(P,FROM,TO)); crossing_time(P,C).

action(cross_together(P_SLOW,P_FAST,FROM,TO)) :-
  side(FROM); side(TO); FROM != TO;
  crossing_time(P_SLOW,T1); crossing_time(P_FAST,T2); T2 < T1.
pre(cross_together(P_SLOW,P_FAST,FROM,TO),at(P_SLOW,FROM)):-
  action(cross_together(P_SLOW,P_FAST,FROM,TO)).
pre(cross_together(P_SLOW,P_FAST,FROM,TO),at(P_FAST,FROM)):-
  action(cross_together(P_SLOW,P_FAST,FROM,TO)).
pre(cross_together(P_SLOW,P_FAST,FROM,TO),lantern_at(FROM)):-
  action(cross_together(P_SLOW,P_FAST,FROM,TO)).
add(cross_together(P_SLOW,P_FAST,FROM,TO),at(P_SLOW,TO)):-
  action(cross_together(P_SLOW,P_FAST,FROM,TO)).
add(cross_together(P_SLOW,P_FAST,FROM,TO),at(P_FAST,TO)):-
  action(cross_together(P_SLOW,P_FAST,FROM,TO)).
add(cross_together(P_SLOW,P_FAST,FROM,TO),lantern_at(TO)):-
  action(cross_together(P_SLOW,P_FAST,FROM,TO)).
del(cross_together(P_SLOW,P_FAST,FROM,TO),at(P_SLOW,FROM)):-
  action(cross_together(P_SLOW,P_FAST,FROM,TO)).
del(cross_together(P_SLOW,P_FAST,FROM,TO),at(P_FAST,FROM)):-
  action(cross_together(P_SLOW,P_FAST,FROM,TO)).
del(cross_together(P_SLOW,P_FAST,FROM,TO),lantern_at(FROM)):-
  action(cross_together(P_SLOW,P_FAST,FROM,TO)).
cost(cross_alone(P_SLOW,P_FAST,FROM,TO),C) :-
  action(cross_alone(P_SLOW,P_FAST,FROM,TO)); crossing_time(P_SLOW,C).
\end{verbatim}

Let's run the stepless solver on this. On the first
iteration we input one occurrence of every fluent and every action as well as a
bonus zero'th occurrence of each initial fluent.

\begin{verbatim}
is(fluentOcc(F,1)) :- fluent(F).
is(actOcc(A,1)) :- action(A).
is(fluentOcc(F,0)) :- init(F).
\end{verbatim}

It gives back a directed graph of action and fluent dependencies. After
toplogically sorting the graph and throwing out everything that isn't an action
we have the plan:

\begin{verbatim}
cross_together(jack,joe,side_a,side_b)
cross_alone(joe,side_b,side_a)
suffix cross_together(candice,averell,side_a,side_b)
suffix cross_alone(joe,side_a,side_b)
suffix cross_together(william,jill,side_a,side_b)
cost: 29
\end{verbatim}

In the suffix layer when Candice and Averell cross from $side\_a$ to $side\_b$,
the fluent $lantern\_at(side\_a)$ is not deleted (because the suffix layer encodes the
delete-free relaxation of the problem), so this is still considered to be
achieved when Joe, and then William and Jill cross. Nobody needs to bring the
lantern back for them.
The use of the suffix layer is allowed because there isn't a
second occurrence of the fluent $at(joe(side_b))$, but this is a starting
fluent (all 3 suffix actions are starting actions since they do not depend on
each other).
Since the suffix layer was used,
we add a second occurrence of each of the fluents and actions which were
saturated by this plan:

\begin{verbatim}
Adding:
is(fluentOcc(at(joe,side_a),2)).
is(fluentOcc(lantern_at(side_a),2)).
is(fluentOcc(lantern_at(side_b),2)).
is(fluentOcc(at(joe,side_b),2)).
is(fluentOcc(at(jack,side_b),2)).
is(actOcc(cross_together(jack,joe,side_a,side_b),2)).
is(actOcc(cross_alone(joe,side_b,side_a),2)).
\end{verbatim}
and run it again:

\begin{verbatim}
cross_together(william,joe,side_a,side_b)
cross_alone(joe,side_b,side_a)
cross_together(jill,joe,side_a,side_b)
cross_alone(joe,side_b,side_a)
suffix cross_together(candice,averell,side_a,side_b)
suffix cross_together(jack,joe,side_a,side_b)
cost: 32
\end{verbatim}

This time we start by having William and Joe cross together and then Joe
carries the lantern back, crosses with Jill and carries it back again. In the
suffix layer, Candice and Averell cross
together while Jack and Joe cross together (each pair making use of the same
undeleted lantern).
Again the suffix layer occurs because we don't have enough occurrences of
$at(joe(side\_b))$.

Interestingly, a cheaper solution seems to have been skipped. Namely the plan which
is identical to the cost-$29$ plan, but with Joe running across and running back
first for a total cost of $31$.

This is because such a plan fails to make progress. We can produce two cuts, namely
the one at the start of the plan and the one after Joe crosses back the first time and see that
no new fluents hold between the two cuts. The rules enforcing strong minimality
will reject this plan.

Add another occurrence of each saturated item
\begin{verbatim}
Adding:
is(fluentOcc(at(joe,side_a),3)).
is(fluentOcc(lantern_at(side_a),3)).
is(fluentOcc(lantern_at(side_b),3)).
is(fluentOcc(at(joe,side_b),3)).
is(fluentOcc(at(jill,side_b),2)).
is(fluentOcc(at(william,side_b),2)).
is(actOcc(cross_together(jill,joe,side_a,side_b),2)).
is(actOcc(cross_together(william,joe,side_a,side_b),2)).
is(actOcc(cross_alone(joe,side_b,side_a),3)).
\end{verbatim}
and again:

\begin{verbatim}
cross_together(william,jack,side_a,side_b)
cross_alone(jack,side_b,side_a)
cross_together(jill,jack,side_a,side_b)
suffix cross_alone(jack,side_b,side_a)
suffix cross_alone(joe,side_a,side_b)
suffix cross_together(candice,averell,side_a,side_b)
cost: 33
\end{verbatim}

Here we have William and Jack crossing together. Then Jack crosses back alone.
Jill and Jack cross together, and now Jack \emph{would} cross back alone again
taking the lantern,
but there are only two occurrences of the action
$cross\_alone(jack,side\_b,side\_a)$ in our bag so instead we move into the suffix layer.
In the suffix layer he carries the lantern back, but because of the delete
relaxation, we don't lose the fluent $at(jack,side\_b)$ so he doesn't need to
cross back again.
Candice and Averell use the lantern to cross as does Joe by himself.

The rest of the output from the stepless solver follows:

\begin{verbatim}
Adding:
is(fluentOcc(at(jack,side_a),2)).
is(fluentOcc(at(jack,side_b),3)).
is(actOcc(cross_together(jill,jack,side_a,side_b),2)).
is(actOcc(cross_together(william,jack,side_a,side_b),2)).
is(actOcc(cross_alone(jack,side_b,side_a),2)).

cross_together(jill,joe,side_a,side_b)
cross_alone(joe,side_b,side_a)
cross_together(william,joe,side_a,side_b)
cross_alone(joe,side_b,side_a)
cross_together(jack,joe,side_a,side_b)
cross_alone(joe,side_b,side_a)
suffix cross_alone(joe,side_a,side_b)
suffix cross_together(candice,averell,side_a,side_b)
cost: 34

Adding:
is(fluentOcc(at(joe,side_a),4)).
is(fluentOcc(lantern_at(side_a),4)).
is(fluentOcc(lantern_at(side_b),4)).
is(fluentOcc(at(joe,side_b),4)).
is(actOcc(cross_alone(joe,side_b,side_a),4)).

cross_together(jill,jack,side_a,side_b)
cross_alone(jill,side_b,side_a)
cross_together(william,jill,side_a,side_b)
suffix cross_alone(jill,side_b,side_a)
suffix cross_alone(joe,side_a,side_b)
suffix cross_together(candice,averell,side_a,side_b)
cost: 35

Adding:
is(fluentOcc(at(jill,side_a),2)).
is(fluentOcc(at(jill,side_b),3)).
is(actOcc(cross_together(william,jill,side_a,side_b),2)).
is(actOcc(cross_alone(jill,side_b,side_a),2)).

cross_together(jack,joe,side_a,side_b)
cross_alone(jack,side_b,side_a)
cross_together(jill,jack,side_a,side_b)
cross_alone(jack,side_b,side_a)
cross_together(william,jack,side_a,side_b)
suffix cross_alone(jack,side_b,side_a)
suffix cross_together(candice,averell,side_a,side_b)
cost: 36

Adding:
is(fluentOcc(at(jack,side_a),3)).
is(fluentOcc(at(jack,side_b),4)).
is(actOcc(cross_alone(jack,side_b,side_a),3)).

cross_together(jack,joe,side_a,side_b)
cross_alone(joe,side_b,side_a)
cross_together(jill,joe,side_a,side_b)
cross_alone(joe,side_b,side_a)
cross_together(candice,averell,side_a,side_b)
cross_alone(jack,side_b,side_a)
cross_together(william,joe,side_a,side_b)
cross_alone(joe,side_b,side_a)
cross_together(jack,joe,side_a,side_b)
cost: 37
\end{verbatim}

In the last one, the suffix layer is not used so we're done. No other plans
need be searched.

\end{appendix}

\end{document}